\newcommand*{\Break}{\textbf{break}}
\newcommand*{\End}{\textbf{end}}
\newtheorem{proposition}{Proposition}
\newcommand{\argmin}[1]{\underset{#1}{\operatorname{arg}\,\operatorname{min}}\;}
\newcommand{\appropto}{\mathrel{\vcenter{
  \offinterlineskip\halign{\hfil$##$\cr
    \propto\cr\noalign{\kern2pt}\sim\cr\noalign{\kern-2pt}}}}}
\begin{document}

\title{Asymptotic Boundary Shrink Control with Multi-robot Systems}
\author{Shaocheng Luo, Jonghoek Kim, \and Byung-Cheol Min$^*$

\thanks{Manuscript accepted June 15, 2020.

This work was supported in part by NSF CAREER Award IIS-1846221.

S. Luo, and B.-C. Min are with SMART Lab, Department of Computer and Information Technology, Purdue University, West Lafayette, IN 47907, USA.

J. Kim is with Electrical and Electronic Convergence Department, Hongik University, Sejong-City, South Korea.

$^*$ Corresponding author email: {\tt minb@purdue.edu}.}}
\markboth{Preprint version. Published in IEEE Transactions on Systems, Man, and Cybernetics: Systems - \url{https://doi.org/10.1109/TSMC.2020.3003824}}%
{\MakeLowercase{\textit{Luo and Min}}:}

\setlength{\textfloatsep}{8pt}
\maketitle


\begin{abstract}
Harmful marine spills, such as algae blooms and oil spills, damage ecosystems and threaten public health tremendously. Hence, an effective spill coverage and removal strategy will play a significant role in environmental protection. In recent years, low-cost water surface robots have emerged as a solution, with their efficacy verified at small scale. However, practical limitations such as connectivity, scalability, and sensing and operation ranges significantly impair their large-scale use. To circumvent these limitations, we propose a novel asymptotic boundary shrink control strategy that enables collective coverage of a spill by autonomous robots featuring customized operation ranges. For each robot, a novel controller is implemented that relies only on local vision sensors with limited vision range. Moreover, the distributedness of this strategy allows any number of robots to be employed without inter-robot collisions. Finally, features of this approach including the convergence of robot motion during boundary shrink control, spill clearance rate, and the capability to work under limited ranges of vision and wireless connectivity are validated through extensive experiments with simulation.

\begin{IEEEkeywords}
Multi-robot coordination, Spill removal, Boundary shrink control, Distributed control, Artificial potential field
\end{IEEEkeywords}
\end{abstract}


%



\section{Introduction}
\label{sec:intro}
\IEEEPARstart{C}{overage} operation has a wide application scope in environment searching and exploration. Currently, methods of coverage control have seen tremendous potential in solving environmental issues caused by side effects of urbanization and industrialization. 
One global issue of increasing importance is water pollution, including harmful algae blooms that result from eutrophication, and oil leakage in the oceans during transportation, due to its connections with potable water supplies and even marine ecosystems. 
It is a natural idea to deploy a team of water surface robots for cleaning such pollution, in order to protect human operators and to provide for immediate operation response before any ecological or economic damage results.

\begin{figure}[t]
\centering
\includegraphics[width=0.9\columnwidth]{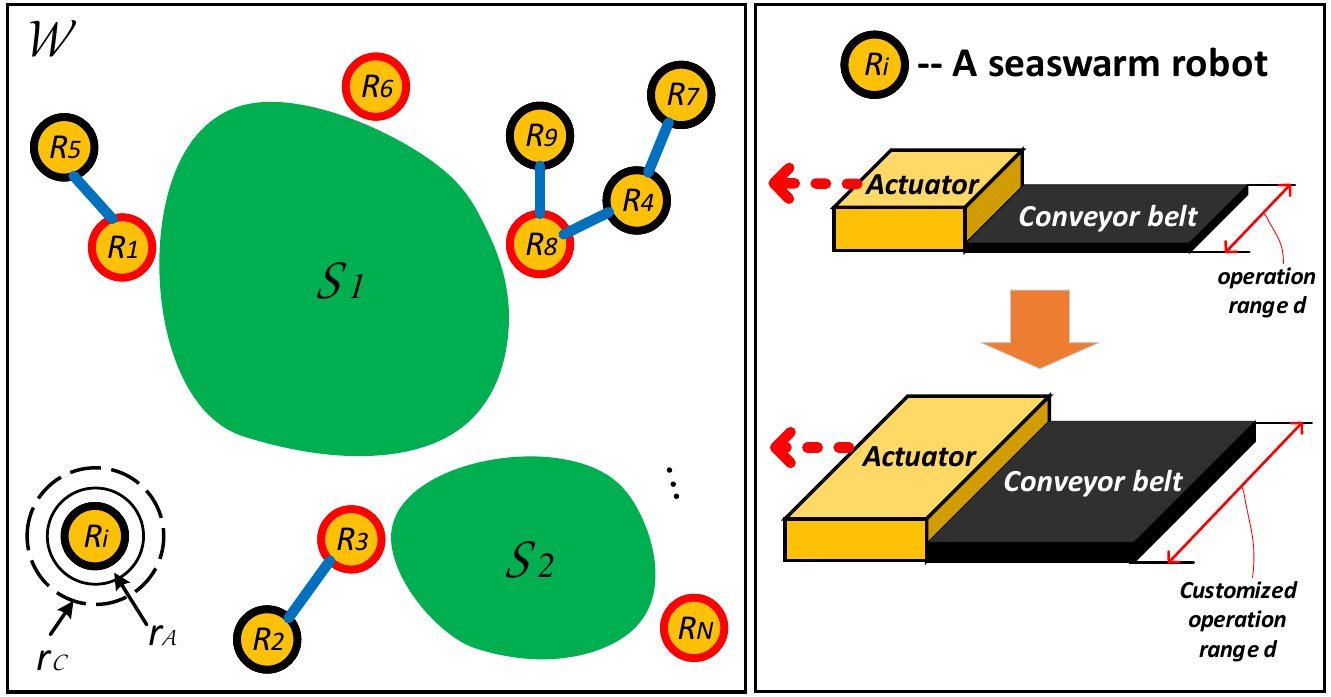}
\caption{A demonstrative figure showing the workspace $\mathcal{W}$ of spill coverage, where two spills, $\mathcal{S}_1$ and $\mathcal{S}_2$, are surrounded by $N$ number of distributed water surface robots such as MIT Seaswarm robots \cite{mit2010}. We consider to advance the Seaswarm robot by introducing a customized operation range $d$ to adapt different operation scenarios, whose control and multi-robot coordination scheme is implemented in our proposed strategy. For each robot $R_i$, $r_\mathcal{A}$ and $r_\mathcal{C}$ represent its vision sensor range and the wireless communication range, respectively. The robots with red circles are in the vicinity of the spills, and the blue line represents a wireless connection. Note that the ranges and sizes of robot are not in real-scale.}
\label{fig:seaswarm}
\end{figure}

Researchers from MIT developed one approach for the removal of oil spills in the ocean: Seaswarm, a fleet of low-cost, oil-absorbing robots \cite{mit2010}. These robots are powered by solar cells and designed to move on the water surface, absorbing oil spills through the long edge of an attached conveyor belt, as shown in Fig.~\ref{fig:seaswarm}. The Seaswarm robot has proven to be an effective prototype for spill cleaning operations; however, hardly any relevant research can be found that discusses control and collaboration strategies for this class of robot.

Outdoor coverage operations usually rely heavily on a global coordinate or external localization system, such as the Global Positioning System (GPS) \cite{clark2005cooperative}. However, robot control based on  GPS sensors suffers from a low update rate ($\leq$ 1 Hz), large and high-variance localization errors, and signal loss due to harsh weather. 
Moreover, in military scenarios, there may be cases where the GPS signal is jammed or not available due to an enemy's attack. Therefore, we develop a networked multi-robot system which does not rely on such external signals.

In this work, we propose an asymptotic boundary shrink control method using a team of robots that can collaboratively remove spills in diverse environments. We do not assume a global view of the workspace; instead, the robot team will first perform a random walk until each spill has been detected by at least one robot \cite{dhariwal2004bacterium}. For each spill, the nearest robot becomes a rendezvous point for other robots by way of wireless connectivity mediated through on-board communication modules \cite{luo2019multi}. When any robot finishes migrating toward the rendezvous point to which it is allocated, this robot will begin boundary shrink control. When a boundary shrinks to zero and the robots meet each other, the associated spill has been covered and removed. During the boundary shrink process, we use local monocular vision sensors with a single line detection method to reduce reliance on GPS sensors and obtain more accurate control input. 

To the best of our knowledge, the boundary shrink control method proposed in this work is the first to enable collective spill removal operation with multiple robots similar to Seaswarm \cite{mit2010}, while also considering requirements such as adjustable robot operation ranges and constraints including limited vision and communication ranges, unavailability of global positioning systems, and collision avoidance. In addition to marine operations, the proposed coverage strategy can be applied in situations such as lawn mowing, forest fire-fighting, humanitarian de-mining, and more. As it does not rely on an external localization system, our strategy can even be applied for indoor coverage control, searching, and explorations. 

Additionally, the distributedness of our method can tackle coverage situations where multiple discrete spills exist in the workspace. The efficacy of concurrently performing boundary shrink control on multiple spills is demonstrated in our simulations in Sec.~\ref{sec:experiments}. The proposed strategy achieves clearance rate of $>99\%$ on multiple spills and with a variety of robot team settings.

The main contributions of this paper are threefold: 
\begin{itemize}
\item This paper proposes a boundary shrink control method without path planning or localization;
\item The proposed method can realize complete coverage of multiple spills in a distributed way, under practical constraints including customized robot operation ranges and limited communication and vision sensing ranges; 
\item The robot team allows strong scalability of the method.
\end{itemize}

The proposed coverage control strategy can also be applied in situations such as water quality monitoring \cite{luna2017robotic}, pollutant source searching \cite{liu2016automated}, lawn mowing, forest fire-fighting, humanitarian de-mining, and others.

The paper is organized as follows: Sec.~\ref{sec:work} reviews the relevant work and compares our method with the state-of-the-art research. Sec.~\ref{sec:problem} defines the research problem under the necessary assumptions and presents the proposed solution. Sec.~\ref{sec:modeling} describes the control model for the robot, which is followed by a boundary shrink model, coverage manifold, and bounded moving speed during coverage based on robot processing capability. A multi-point multi-robot rendezvous strategy for boundary shrink control with limited wireless communication ranges is detailed in Sec.~\ref{sec:rendezvous}, while the boundary shrink control method is elaborated in Sec.\ref{sec:shrink_control}. Experiments with simulation and analyses of significant properties including boundary evolution, spill clearance rates, robot convergence, and operation time are elaborated in Sec.~\ref{sec:experiments} using the Robotarium testbed in MATLAB. Spills with non-convex coverage present challenges to the proposed solution, and these challenges are discussed in Sec.~\ref{sec:nonconvex_spill}. This work is concluded in Sec.~\ref{sec:conclusions}.

\section{Related Work}
\label{sec:work}
Multi-robot systems have recently exhibited strong capability in environmental operations including cleaning oil spills \cite{jin2014navigation} and fighting forest fires \cite{ghamry2017multiple}, which implicitly involves collective coverage of the area of interest. 

The majority of research into multi-robot coverage operation focuses on path planning and robot distribution strategies, which are enabled by assuming a known environment or defined workspace. Coverage control using trapezoidal decompositions and Morse decompositions toward a defined area is discussed in \cite{choset2005principles} and \cite{acar2002morse}, respectively, where a robot was designated to take care of a specific small area and travel in straight lines. Coverage of a water area was presented in \cite{li2016underwater}, which required that each robot knows the planned path and its own accurate location. The work in \cite{an2017rainbow} proposed to achieve target region coverage by way of motion planning based on triangulation. Multi-robot coverage with Boustrophedon decomposition is discussed in \cite{kong_distributed_2006}, which requires a pre-partitioned workspace for each robot as well as incessant localization.
Strategies using pre-planned paths or motions can result in higher efficiency and less computational consumption for each robot, but in practice, the global coordinate frame and localization techniques for robot trajectory tracking are not always available, making tracking along the planned path unattainable. Moreover, the scale and shape of a real-world spill are usually unknown to the robot team, and coverage methods that depend on pre-planning are less adaptive to unknown environments or abrupt changes during the course of the operation. 

In light of the aforementioned problems, our proposed method does not rely on any planning or global localization systems, such as \cite{zhou2017real,cui2015mutual}, and therefore can be applied in unknown or GPS-denied environments. We utilize a local monocular vision sensor with which each robot detects and tracks the spill boundary. However, the neighboring robot collision avoidance can be achieved using sonar \cite{luo2019multi}, radar, or lidar sensors with 360 degrees scanning \cite{gohring2011radar}. Therefore, the vision field or blind zone of vision sensors will not hinder the robot collision avoidance. A robot does not need to share its boundary detection information with others to construct a global map, as was done in \cite{zhang2010cooperative}; instead, it can perform the coverage operation with a simple tracking model. Additionally, unlike other work involving the detection of a two-line boundary (left and right),  similar to a driveway \cite{monocular,Salamanca}, our method enables single line detection and so avoids massive computation.

Furthermore, many aforementioned coverage control methods feature centralized control schemes, such as those of \cite{choset2005principles,acar2002morse}. The Boustrophedon decomposition in \cite{kong_distributed_2006} requires a pre-partitioned workspace for each robot, which limits its application scope. With centralized control, coverage performance depends largely on the master robot, whose decisions are based on feedback information from all slave robots. 
During operation, robots cannot be added to or removed from the task, meaning the system does not scale to different numbers of robots nor adapt to faulty robots. Our method advances the literature by nature of its complete distributedness, where each robot in a team can choose its own working state independent of its neighbours. Thus, the control scheme need not be updated according to the number of deployed robots.

Distributed coverage control methods based on Voronoi diagrams and Lloyd’s algorithm have drawn attention in recent years. In papers such as \cite{schwager2011unifying,alonso2011multi}, a team of robots is used to cover a specific area in a static manner once each robot is deployed to its goal position. A strategy based on a pivot robot is developed in \cite{luo2018pivot}. The cooperative surveillance strategy from \cite{casbeer2006cooperative} can be used in spill removal, but it requires massive inter-robot communications and robust connections. The work in \cite{qu2014finite} proposed a moving coverage pattern where each agent $i$ in the team has a specific sensor model and a defined sensory domain. Notably, all of these methods are primarily designed for information coverage, having multiple agents equipped with sensors, rather than for physical coverage such as is needed for spill removal. The works mentioned above, especially  \cite{qu2014finite,jin2014navigation}, assumed each robot to be a point mass and did not elaborate the kinematics and control of the robots. Furthermore, they did not take into account practical limitations such as designated operation ranges, limited sensing ranges, and impediments to wireless connectivity. Our proposed boundary shrink control solution shows efficacy under these practical limitations, and is verified by extensive simulations referring to a Seaswarm robot prototype \cite{mit2010}.

Our method also addresses problems such as the existence of multiple discrete spills in the workspace and collision avoidance for collaborating robots, which further strengthens the significance of our solution.

\section{Problem Statement and Mathematical Foundation}
\label{sec:problem_and_math_foundation}

This section structurally formulates the research problems in a mathematical way with reasonable assumptions.

\subsection{Problem Statement}
\label{sec:problem}
Given a workspace denoted as $\mathcal{W} \in \mathbb{R}^2$, as shown in Fig.~\ref{fig:seaswarm}, this study demonstrates how a certain number of randomly distributed robots can be controlled to implement coverage operation in the case of constraints such as limited vision sensing ranges and limited wireless signal ranges, while adapting to customized operation ranges. Meanwhile, this study reveals the optimal partitioned to the robots in a graph with several disconnected components and explains how they can collaborate in shrinking the spill boundary and complete coverage with only local image sensing. 

As to the workspace, it is obstacle-free except for robots, and it is possible that many spills exist in the area of interest.
Avoiding collision between robots coming from different spills can be handled by letting each robot use simple local collision avoidance controller, e.g., \cite{Cone} or applying techniques such as stigmergic communication \cite{ranjbar2012multi}. Thus, it is not within the research scope of this paper. 

We assume a sufficient number of robots deployed in the coverage operation. However, if only a few robots are available, for instance less than the number of spills, they can be redeployed to cover the remaining spills by applying the boundary shrink control method. For each robot in a team, it begins a random walk from the same dock as the others and stops once detecting the boundary. The dock cannot be inside a spill, otherwise no robot can move outward to detect other spills. Furthermore, at the time when the random walk is ceased, no robot can be inside a spill, because it should have stopped before traversing the boundary. We also assume that those robots that cannot see the spill are wirelessly connected to at least one other robot that can see the boundary.

As a solution, this paper first introduces a unicycle model for robot control, shown in Fig.~\ref{fig:dynamics}. Based on this model, a hybrid control strategy is proposed for the robot team.
The motion of each robot is represented as a finite state machine consisting of three individual states, as shown in Fig.~\ref{fig:state_transition}, with the collision avoidance hierarchy being prioritized as {\it Tracking \textgreater \, Searching \textgreater \, Rendezvous}. 
A more detailed illustration of these states for one spill scenario is found in Fig.~\ref{fig:cycle}. The task is completed once there is no spill within the view of the robot.

\subsection{Mathematical Foundation for Boundary Shrink Control Method and Robot}
\label{sec:modeling}
In order to show the performance of the robot in coverage, we first need to elaborate a robot control model and then robot control laws with a bounded speed. In this sub-section, the boundary shrink control model for spill coverage is presented.

\begin{figure}[t]
\centering
\includegraphics[width=0.75\columnwidth]{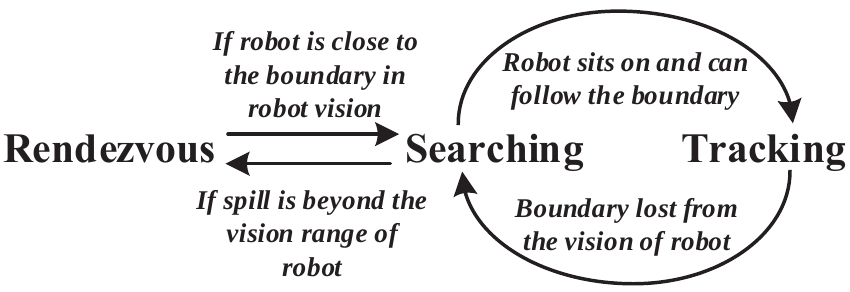}
\caption{The state diagram of the proposed multi-robot boundary shrink control method, and the state transition of robots in workspace $\mathcal{W}$. The collision avoidance hierarchy is prioritized as {\it Tracking \textgreater \, Searching \textgreater \, Rendezvous}.}
\label{fig:state_transition}
\end{figure}

\subsubsection{Vehicle Model}
The driving system that actuates the robot is composed of two unidirectionally placed parallel actuators (e.g., thrusters), which give the robot mobility with three degrees of freedom (DOF): x- and y-translation, and z-rotation. The robot control model plays an important role in determining the motion control laws. Assuming the center of buoyancy coincides with the center of gravity, and they are balanced with each other, it is concluded that all the robots move in a two-dimensional space $\mathbb{R}^2$.
$R_i$ denotes the $i$-th robot, here $i\in \{1,..., N\}$.
Let the generalized coordinate of $R_i$ be $\textbf{q}_i=(x_i,y_i,\theta_i)$, we introduce a unicycle model \cite{zhang2013spill}, which has been validated previously through field tests using surface vehicles  \cite{8962837,jo2019development}, as below for the robot actuators shown in Fig.~\ref{fig:dynamics}.  
\begin{equation}
\dot{\textbf{q}}_i = \begin{bmatrix} \dot{x}_i \\ \dot{y}_i \\ \dot{\theta}_i \end{bmatrix} = \begin{bmatrix} \cos{\theta_i} & 0 \\ \sin{\theta_i}  & 0 \\ 0 & 1 \end{bmatrix} \begin{bmatrix} v_i \\ \omega_i \end{bmatrix} = \textbf{J}(\theta_i)\textbf{Q}_i
\label{equ:kine_1}
\end{equation}

\begin{equation}
\textbf{Q}_i = \begin{bmatrix} v_i \\ \omega_i \end{bmatrix} = 
\begin{bmatrix} \frac{1}{2} (v_{ri} + v_{li}) \\ 
\frac{1}{L} (v_{ri} - v_{li}) \end{bmatrix} =
\begin{bmatrix} \frac{1}{2} & \frac{1}{2}  \\ 
\frac{1}{L} & -\frac{1}{L} \end{bmatrix}
\begin{bmatrix} v_{ri} \\ v_{li} \end{bmatrix} 
\label{equ:kine_2}
\end{equation}
where $\textbf{Q}_i$ is the velocity vector and control objective of robot $R_i$ consisting of linear velocity $v_i$ and angular velocity $\omega_i$ with respect to the centroid. $L$ in (\ref{equ:kine_2}) denotes the tread width of the robot. 
The angular velocity $\omega_i$ is bounded by $\omega_{max}$,
which will be determined later in Sec.~\ref{sec:tracking_peri_cyc}.
Furthermore, the angular controller for the robot is discrete in time and its resolution is $\phi_{ci}=\omega_iT_c$, where $T_c$ is the system cycle. 
Since the robot is driven by the two parallel actuators, the control objective space of $\textbf{Q}_i$ has to be transformed into the space of $v_{li}$ and $v_{ri}$. Additionally, we formulate control input as $[u_i, w_i]$, to differentiate from the velocity $[v_i,\omega_i]$.
$[u_i, w_i]$ is the control input set as a ``high level control", and $[v_i, \omega_i]$ represents the actual motion of the robot due to the control input.

\subsubsection{Boundary Shrink Model}
\label{sec:shrink_model}

\begin{proposition}
\label{prop:shrink_model}
When a team of robots performs coverage of a spill collectively via  tracking the boundary, each robot with an effective range $d$, an asymptotic boundary shrink control can be realized. 
\end{proposition}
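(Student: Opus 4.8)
The plan is to use the area $A(t)$ enclosed by the current spill boundary as a Lyapunov-type quantity: show that it is non-increasing along the collective motion and converges to $0$, and then let the controller of Sec.~\ref{sec:tracking_peri_cyc} certify that the robots actually execute the required motion under the unicycle dynamics (\ref{equ:kine_1})--(\ref{equ:kine_2}). First I would set up the bookkeeping for one full traversal of the boundary, assuming the spill $\mathcal{S}$ is convex (the non-convex obstructions are deferred to Sec.~\ref{sec:nonconvex_spill}). If, while tracking all of $\partial\mathcal{S}$, the team collectively removes the material within each robot's effective range $d$ and this coverage really sweeps the full inward strip of width $d$, then the residual spill is contained in the inner parallel set $\mathcal{S}_{-d}=\{x\in\mathcal{S}:\operatorname{dist}(x,\partial\mathcal{S})\ge d\}$. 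Two cases then close the estimate. If $\mathcal{S}_{-d}=\varnothing$, every point of the spill lies within $d$ of $\partial\mathcal{S}$ and the single traversal clears it. Otherwise $\mathcal{S}$ contains a disk of radius $d$, $\mathcal{S}_{-d}$ is convex and nonempty, and from $\mathcal{S}_{-d}\oplus B_d\subseteq\mathcal{S}$ together with the Steiner formula for the Minkowski sum with a disk I obtain $\operatorname{Area}(\mathcal{S})\ge\operatorname{Area}(\mathcal{S}_{-d})+d\,|\partial\mathcal{S}_{-d}|+\pi d^{2}$, so the traversal decreases the enclosed area by at least $\pi d^{2}$.

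Combining the two cases yields convergence: $A(t)$ is non-increasing, bounded below by $0$, and drops by the fixed positive amount $\pi d^{2}$ on every traversal until the spill becomes thin enough that $\mathcal{S}_{-d}=\varnothing$, after which one further traversal removes it; hence $A(t)\to 0$ and the boundary contracts to a point, which is the asserted asymptotic boundary shrink (in continuous time the same bookkeeping gives $\dot A(t)<0$ whenever $A(t)>0$, by differentiating the swept-area integral). It then remains to invoke the tracking state of the hybrid controller, with the bounds on $v_i$ and $\omega_i\le\omega_{max}$ and the angular resolution $\phi_{ci}=\omega_iT_c$ fixed in Sec.~\ref{sec:tracking_peri_cyc}, together with the collision-avoidance hierarchy of Sec.~\ref{sec:problem}, to confirm that each robot stays on the moving boundary, follows it at a well-defined rate, and never overlaps a neighbour; this is what upgrades the purely geometric area estimate to a closed-loop guarantee and completes the argument.

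The hard part is the very first step: showing that collective boundary tracking with per-robot range $d$ genuinely composes into the erosion $\mathcal{S}\mapsto\mathcal{S}_{-d}$, i.e., that the robots' individual coverage footprints tile a complete annular strip of width $d$ around the spill with no uncovered pockets. This must survive the discrete-time angular resolution $\phi_{ci}=\omega_iT_c$, the turning-rate cap $\omega_{max}$, and the minimum inter-robot spacing forced by collision avoidance: on strongly curved stretches of $\partial\mathcal{S}$ a robot's minimum turning radius could leave small gaps. The argument must therefore either bound the boundary curvature generated by the shrink against the $\omega_{max}$ chosen in Sec.~\ref{sec:tracking_peri_cyc}, or show that any residual pocket is itself of size $O(d)$ and is thus absorbed on the next traversal. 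The remaining technical point is to verify that the ``sufficient number of robots'' hypothesis of Sec.~\ref{sec:problem} guarantees full boundary coverage at \emph{every} instant, not only at the start of a traversal; the non-convex case, in which the erosion can disconnect $\mathcal{S}$ and these guarantees break, is left to Sec.~\ref{sec:nonconvex_spill}.
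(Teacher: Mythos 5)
Your proposal is correct in its geometric core but takes a genuinely different route from the paper. The paper's proof is a short local computation on the \emph{perimeter}: as a robot advances along an arc subtending angle $\Delta\varphi$ at radius of curvature $r_i$, the tracked boundary segment changes from $\Delta\varphi\cdot r_i(t)$ to $\Delta\varphi\,(r_i(t)-d)+2d$, and the authors simply note that even though this quantity can transiently \emph{increase} (whenever $\Delta\varphi<2$), the boundary must eventually reach zero once the spill is exhausted; the convergence claim itself is essentially asserted (the proof opens with the word ``Trivial''). You instead work with the enclosed \emph{area}, model one collective traversal as the erosion $\mathcal{S}\mapsto\mathcal{S}_{-d}$, and use the Steiner formula on $\mathcal{S}_{-d}\oplus B_d\subseteq\mathcal{S}$ to extract a uniform decrement of at least $\pi d^{2}$ per traversal, giving an explicit bound of roughly $\operatorname{Area}(\mathcal{S})/(\pi d^{2})$ traversals plus one. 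This buys a quantitative, genuinely monotone convergence argument that correctly absorbs the transient perimeter growth the paper only acknowledges in passing; it is also consistent with the paper's own removal-rate model $\Delta\mathcal{C}'=vd\,\Delta t=\mathcal{V}\Delta t$, which gives your $\dot A<0$ directly. The price is the step you yourself flag as hard: proving that discrete-time unicycle tracking under $\omega_{max}$, the angular resolution $\phi_{ci}=\omega_i T_c$, and collision avoidance really tiles the full inward strip of width $d$ with no pockets. The paper does not close that gap either --- it appeals to the same coverage-manifold picture of Fig.~\ref{fig:model} and defers non-convexity to Sec.~\ref{sec:nonconvex_spill} exactly as you do --- so your argument is no weaker there and strictly stronger on the convergence mechanism; just note that the paper's intended content for this proposition is the local perimeter-evolution formula rather than a formal convergence proof.
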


\begin{proof}
Trivial. As shown in Fig.~\ref{fig:model}, when robot $R_i$ travels along the boundary counterclockwise (CCW) and covers the spill on its left hand side, the unit area that is covered in time $\Delta t$ is $\Delta\mathcal{C}$. When the robot first penetrates into a closed spill and moves from point $p(t_0)$ to $p(t_1)$, as illustrated in Fig.~\ref{fig:model}, the boundary length changes from $s_i(t)=\Delta\varphi\cdot r_i(t)$ to $s_i'(t)+2d=\Delta\varphi\cdot (r_i(t)-d)+2d$, assuming the radius of curvature $r_i$ of the new position $p(t_i)$ is available. 
Even if the length of boundary increases aperiodically at some point, it shrinks to zero when the entire spill is removed, which leads to an asymptotic boundary shrink effect. 
Assuming $N$ robots are involved in the boundary shrink control, the boundary will shrink asymptotically with a greater gradient while the robot team performs coverage. This ends the proof.
\end{proof}

\subsubsection{Bounded Speed and Acceleration}
\label{sec:bounded_speed}
When a robot is tracking the boundary of a spill denoted as $\partial\mathcal{S}$, and visits points $\text{A}(x_1,y_1)$ and $\text{B}(x_2,y_2)$ in sequence, we propose an approximated coverage manifold $\Delta\mathcal{C}'$, which is shown in Fig.~\ref{fig:model} and formulated as below in (\ref{equ:bounded_speed_1}) and (\ref{equ:bounded_speed_2}), to decide the maximal speed of robots. A similar model is used in \cite{luo2018pivot}.
\begin{multline}
\label{equ:bounded_speed_1}
\Delta\mathcal{C}' = \{\text{M}(x,y):(0 < \textbf{AM} \cdot \textbf{AB} < \textbf{AB} \cdot \textbf{AB})\, \cap \\ (0 < \textbf{AM} \cdot \textbf{AD} < \textbf{AD} \cdot \textbf{AD})\}, 
\end{multline}
\begin{equation}
\label{equ:bounded_speed_2}
\text{D}(x_4, y_4) = \big(-\sqrt{\|\textbf{AD}\|^2 - y_4^2} \, , \, \frac{\|\textbf{AB}\|^2 + \|\textbf{AD}\|^2 - \|\textbf{BD}\|^2}{2\|\textbf{AD}\|} \big)
\end{equation}
where $\text{M}$ is a point in covered area $\Delta\mathcal{C}'$, and $\|\textbf{AD}\|=d$ 
is the effective range. Additionally, assuming a limit $\mathcal{V}$ for processing capacity in spill removal, which is similar to centrifugal volume and filtering system capacity in collecting contaminants, by having $\Delta\mathcal{C}' = p_i\cdot d = vd\Delta t = \mathcal{V}\Delta t$, we determine the bounded speed for robots as below:
\begin{equation}
v_{max} = f(\mathcal{V},d) = \frac{\mathcal{V}}{d}.
\label{equ:v_max}
\end{equation}
Here, we further set a maximum acceleration $a_{max}$ which is bounded by actuator output power. When the robot moves, its linear acceleration will follow:
\begin{equation}
|a_i| \leq a_{max}.
\label{equ:a_max}
\end{equation}

The coverage model (\ref{equ:bounded_speed_1}) and the effective range $d$ can be validated with existing robots such as \cite{mit2010}, which is also shown in Fig.~\ref{fig:seaswarm}. A left hand side coverage style leads to a uniform counterclockwise robot team moving direction which helps avoid inter-robot collision in a distributed way. The elaborated details can be found in Sec.~\ref{sec:shrink_control}. 

\vspace{-0.3cm}
\begin{figure}[t]
\centering
\begin{tabular}{cc}
\subfigure[]{\label{fig:dynamics}
\includegraphics[width=0.2\textwidth]{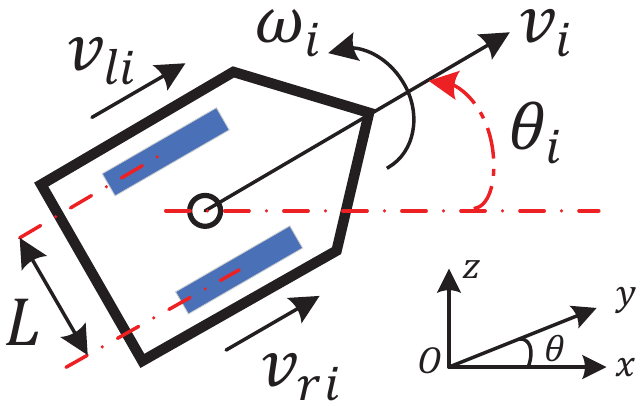}} &
\subfigure[]{\label{fig:model}
\includegraphics[width=0.22\textwidth]{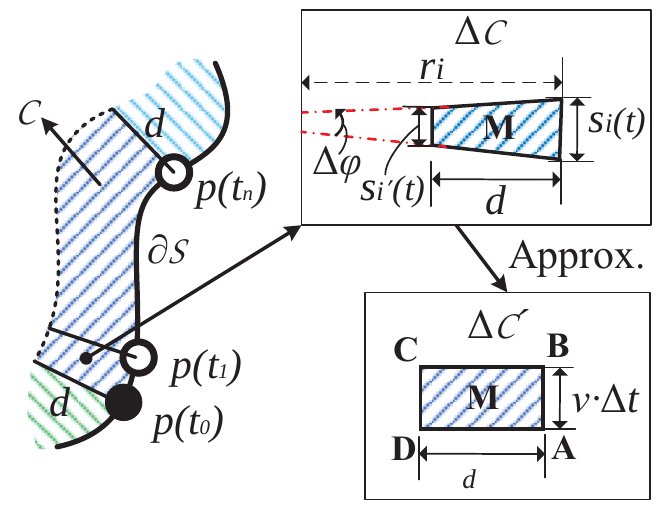}} 
\end{tabular}
\caption{(a) The mobility chassis of the robot driven by two parallel actuators. The robot has three degrees of freedom, i.e., $\dot{x}_i,\, \dot{y}_i, \text{and} \,\omega_i$.  (b) Coverage manifold for a robot maneuvering by tracking $\partial\mathcal{S}$. A unit area covered is shown in a subfigure named $\Delta\mathcal{C}$, and its approximation is shown in another subfigure named $\Delta\mathcal{C}'$ with unit time $\Delta t$ and speed $v$.}
\label{fig:two_models}
\end{figure}

\section{Multi-robot Rendezvous}
\label{sec:rendezvous}

This paper handles the scenario where multiple separate spills exist in the workspace. 
With a limited vision sensing range, each spill can be recognized by robots using our previous multi-point rendezvous algorithm in \cite{parasuraman2018multipoint}, provided that every spill has at least one robot at its vicinity at first. As long as we deploy sufficiently many robots initially, this assumption is feasible. 

At the beginning of the operation, every robot that can see spill boundary is set as a rendezvous point $D_m \in D$ $(m\leq \mathcal{M})$. Here, $\mathcal{M}$ is the number of rendezvous points, and $D$ is the set of all rendezvous points. Then, multi-point rendezvous algorithm in \cite{parasuraman2018multipoint} can be applied and every robot can rendezvous to its associated rendezvous point. Such rendezvous process is distributed and assures collision avoidance among robots. 
\footnote{We acknowledge that this paper uses the multi-robot rendezvous algorithm in our previous paper\cite{parasuraman2018multipoint}.
Thus, introducing a new multi-robot rendezvous algorithm is not within the scope of this paper.}

When the rendezvous process is completed under the algorithm in \cite{parasuraman2018multipoint}, every robot will sit near its corresponding rendezvous point and therefore be able to detect the boundary. Once a robot meets its associated rendezvous point, the robot will switch to boundary searching state and move toward the nearest point on the boundary, as shown in Fig.~\ref{fig:state_transition} and \ref{fig:cycle}.

Let $A(k)=(V(k), E(k), W(k))$ denote the undirected graph for wireless connectivity at time step $k$, where $V(k)$ represents the vertex set and $E(k)$ the edge set. The weight $w_{ij}(k)$ of each edge in the weight matrix $W(k):[w_{ij}(k)]$ is approximately proportional to the Euclidean distance between two robots (i.e., $w_{ij}(k) \appropto \|\textbf{q}_i(k)-\textbf{q}_j(k)\|$), which can be determined by wireless radio signal strength (RSS) measurement \cite{luo2019multi}. Note that the graph $A(k)$ is established based on local interaction of each robot by running {\it Algorithm 1} in \cite{parasuraman2018multipoint}.
Since Algorithm 1 in \cite{parasuraman2018multipoint} is distributed, every robot can build  $A(k)$ in a distributed manner.

According to \cite{parasuraman2018multipoint}, each robot in the team $R_i \in \mathcal{R}$ will rendezvous to the leader robot $P(R_i) \in D$ with the least amount of time and energy, by following the optimization function below:
\begin{equation}
    P(R_i) = \argmin{D_m \in D} d_{A(k)}(R_i, D_m),
\label{equ:opt_rend}
\end{equation}
where $d_{A(k)}(R_i, R_j)$ in the function denoting the shortest path between two nodes $R_i$ and $R_j$ in the graph $A(k)$ using Dijkstra's algorithm. Ultimately, $\mathcal{M}$ subgraphs ($A_m\subset A(k)$ where $m\in\{1,2,...,\mathcal{M}\}$) are constructed for each rendezvous point, which is the node in the vicinity of the spills.

For each subgraph $A_m\subset A(k)$, a shortest-path-tree $T_m$ is constructed with $D_m$ as the rendezvous point. A hierarchical rendezvous algorithm, i.e., {\it Algorithm 2} in \cite{parasuraman2018multipoint}, is utilized to enable such rendezvous in each $A_m$ such that all the nodes in this graph can be gathered around the associated spill. Such hierarchical rendezvous is validated through real experiments in \cite{luo2019multi}.

In the hierarchical rendezvous algorithm ({\it Algorithm 2}) of \cite{parasuraman2018multipoint}, the child robot updates its parent from the current to the one in the upper level, when the child robot is very close to the current parent. This fact implies that the leaf robot (i.e., a node without any child) can switch to boundary searching and tracking state when it meets the rendezvous point $D_m$, which therefore closes the loop connecting multi-robot multi-point rendezvous and the proposed boundary shrink control.

In case that the spill is moving, the rendezvous points also move accordingly to avoid losing the vision contact to the boundary. Then, all the associate child robots will have to move following the rendezvous point to preserve wireless connectivity. This feature, namely herding in the literature \cite{parasuraman2018multipoint}, guarantees the success of the boundary shrink control under dynamic spill situation while avoiding creating an additional state to those shown in Fig.~\ref{fig:state_transition}.

Compared with the work in \cite{parasuraman2018multipoint}, this work advances in extending the system distributedness. Instead of assuming a connected robot team in the workspace as described in \cite{parasuraman2018multipoint}, we validate cases where multi-point rendezvous succeeds in a graph with several disconnected components, as shown in Sec.~\ref{sec:case_2}. A disconnected graph is more natural and practical for distributed robotic systems with limited wireless communication range.

\section{Boundary Shrink Control Scheme}
\label{sec:shrink_control}
This section discusses how robots perform boundary shrink control by searching and tracking the boundary of a spill, and what controllers are designed in order to both avoid collision and achieve a complete coverage over the spill. Boundary searching and tracking happen simultaneously, thus they are discussed together here.

\subsection{Boundary Searching Controller and  Algorithms}
\label{sec:tracking_peri_cyc}
When each robot is in place, it will then move along the spill boundary with the interior of the spill on the left hand side. In other words, all the robots move counterclockwise around the boundary, as demonstrated in Fig.~\ref{fig:cycle}, in the boundary tracking state. For a robot in rendezvous to begin tracking state and perform coverage operation, it has to switch to searching state first and align itself to the direction of the boundary. The boundary tracking control strategy is elaborated in Algorithm~\ref{alg:B_searching}, while its initialization is detailed in Algorithm~\ref{alg:B_searching_init}.

\begin{figure}[t]
\centering
\includegraphics[width=0.99\columnwidth]{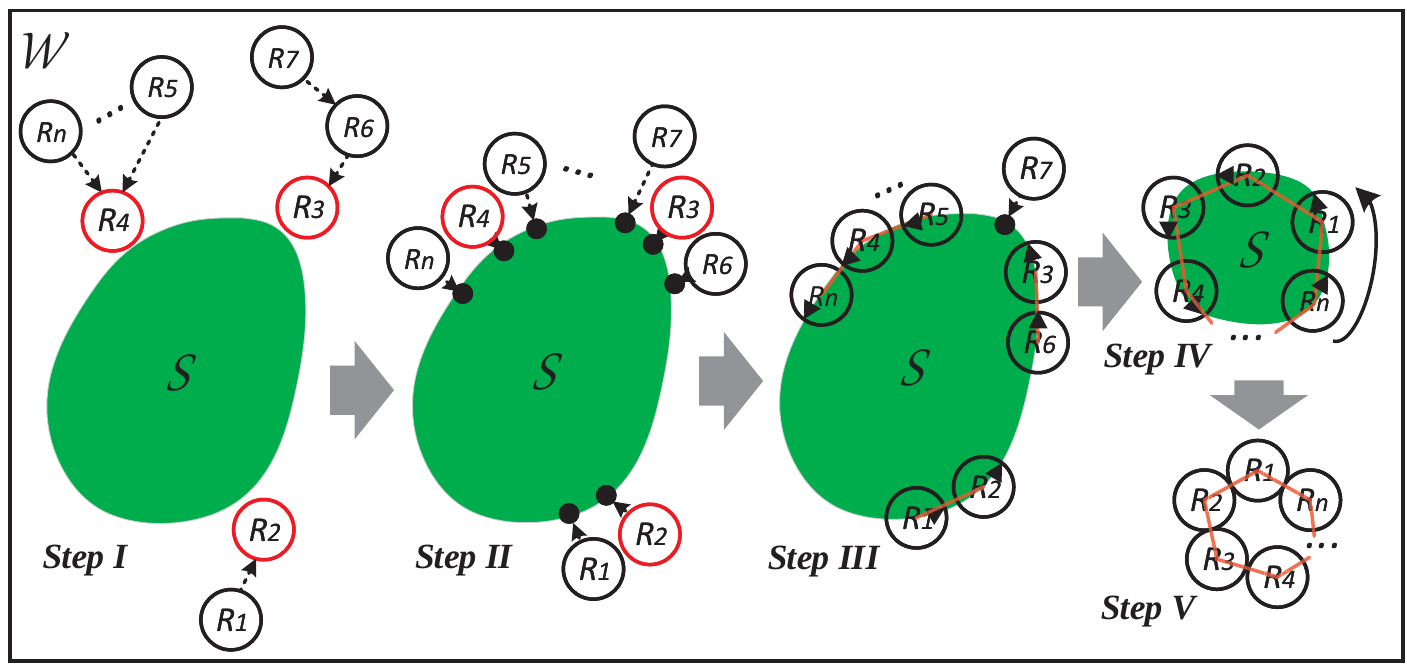}
\caption{A representative figure showing our proposed collective boundary shrink control strategy. The multi-robot hierarchical rendezvous is demonstrated in {\it Step I} where robots serving as rendezvous points $D_m$ are highlighted with a red circle and the rendezvous hierarchy is indicated with dashed arrows. The robots deployed shall start boundary searching and navigate to the goals when they are in the vicinity of the spill and have no child robot associated, as shown in {\it Step II}.  {\it Step III} demonstrates boundary tracking state on the completion of boundary searching, and robots maneuver along the boundary to shrink the spill. When robots are moving sequentially and getting closer, the proposed strategy prevents collision, as depicted in {\it Step IV}. The task is finished in {\it Step V}, where boundary shrinks to zero and no spill remains. The black dots denote the nearest deployment positions that robots are aiming for in the searching state. The triangles indicate robot moving direction, while the red lines mean the leading and trailing robots reside within APF effective range.}
\label{fig:cycle}
\end{figure}

The goal position on the spill boundary can be determined via local vision sensors such as cameras, where the traveled distance of robots is minimized. The shortest distance $\|d_i\|_{min}$ between $q_{s,i}$, the start position of robot $R_i, i \in {1, 2,...,N}$, and $q_{g,i}$, the goal position which lays on the edge of spill $\mathcal{S}$, can be estimated after the spill boundary is identified by image processing techniques \cite{konolige2008outdoor}. Given the coordinates of goal positions for robots in the workspace, our study applies the artificial potential field (APF) method to enable navigation until robots reaches the goal positions. The navigation can be achieved with a robot onboard vision sensor that measures the current distance to the goal. An attractive potential field is introduced to achieve navigation and a repulsive potential field is introduced for collision avoidance. The robots working in the tracking state are assumed to have a higher avoidance priority than those maneuvering for rendezvous or searching. The avoidance priority is reflected in Fig.~\ref{fig:state_transition}. If the workspace consists of multiple spills that need to be covered by the robots, an appropriate partition to the robot team has to be decided in completing multi-point rendezvous algorithm. The searching and tracking to the boundary remain the same for multiple spills cases as a single spill scenario.

Let $\textbf{q}_{g,j}\in\mathbb{R}^2$ denote the goal position of a robot $R_j$. Let $d(\textbf{A}, \textbf{B})$ denote the distance between two positions $\textbf{A}\in \mathbb{R}^2$ and $\textbf{B}\in\mathbb{R}^2$.
The attractive potential for a robot $R_j$ moving toward the goal is formulated as
\begin{equation}
\label{equ:attractive_APF_dep}
U_d(\textbf{q}_j) = \frac{1}{2}\xi_1 d(\textbf{q}_j, \textbf{q}_{g,j})^2
\end{equation}
where $\xi_1$ is a scaling parameter.
Meanwhile, a repulsive potential formulated as (\ref{equ:repulsive_APF_dep}) is exerted on the robots working in searching state by the robots in tracking state, in order to avoid collision.
\begin{equation}
\begin{gathered}
U_i(\textbf{q}_j) = \begin{cases}
\frac{\xi_2}{2}  \Big(\frac{1}{d(\textbf{q}_j, \textbf{q}_i)} - \frac{1}{d_0} \Big)^2, & \text{if}\, d(\textbf{q}_j, \textbf{q}_i) \leq d_0, \\
0, & \text{if}\, d(\textbf{q}_j, \textbf{q}_i) > d_0, 
\end{cases} \\
i \in \mathcal{N}_t
\label{equ:repulsive_APF_dep}
\end{gathered}
\end{equation}
where $\xi_2$ is a positive scaling factor.
$\mathcal{N}_t$ denote the robots working in tracking state. Here $d_0$ is the effective range, and we further assume that all the robots have equal $d_0=r_{\mathcal{A}}$. 

Due to this, for any robot $R_j$, the final constructed potential field for searching navigation is:
\begin{equation}
\label{equ:sum_APF_dep}
U^*(\textbf{q}_j) = U_d(\textbf{q}_j) + \sum_{i \in \mathcal{N}_t} U_i(\textbf{q}_j)
\end{equation}

From the potential field function (\ref{equ:attractive_APF_dep}), and assuming no other robot within scope $d_0$, we obtain the control input for the robots as below:

\begin{subequations}
\label{equ:deployment_vel}
\begin{align}
\textbf{u} &= k_u\tanh (\|\textbf{q}_g-\textbf{q}\|^2), \label{equ:deployment_vel_u}\\
\textbf{w} &= -k_w(\boldsymbol{\uptheta}-\boldsymbol{\upvarphi})+\dot{\boldsymbol{\upvarphi}}.  \label{equ:deployment_vel_w}
\end{align}
\end{subequations}

Here, $\boldsymbol{\uptheta}=[\theta_1,\theta_2,...,\theta_N]$, and $\boldsymbol{\upvarphi}=[\varphi_1,\varphi_2,...,\varphi_N]$. 
$\varphi_i \triangleq \arctan\big(\frac{U^*_y}{U^*_x}\big)$ is the direction of the collective potential field at a point $(x,y)$.
Here, $U^*_x=\frac{\partial U^*}{\partial  x}$ and $U^*_y=\frac{\partial U^*}{\partial y}$.
In (\ref{equ:deployment_vel_w}), $k_u>0$ and $k_w>0$ are control gains, $\textbf{u}=[u_1, u_2,..., u_N]$ and $\textbf{w}=[w_1, w_2,..., w_N]$ are the input linear and angular velocity vectors of all the robots in searching state $\mathcal{N}_s$, respectively.  $\dot{\varphi}_i$, the time derivative of $\varphi_i$, is shown as below considering the nonlinear model (\ref{equ:kine_1}): \begin{equation*}
\begin{aligned}
\dot{\varphi}_i = \frac{1}{{U^*_x}^2 + {U^*_y}^2}\bigg(\bigg(\frac{\partial U^*_y}{\partial x}\cos \theta_i + \frac{\partial U^*_y}{\partial y}\sin \theta_i \bigg) U^*_x - \\ 
\bigg(\frac{\partial U^*_x}{\partial x}\cos \theta_i + \frac{\partial U^*_x}{\partial y}\sin \theta_i \bigg) U^*_y \bigg)u_i
\end{aligned}
\end{equation*}
where $u_i$ is provided in (\ref{equ:deployment_vel_u}). Here, the singularity happens when ${U^*_x}^2 + {U^*_y}^2 = 0 \Leftrightarrow U^*_x = 0 \land U^*_y = 0$. The singularity can be reached only if robot $R_i$ is located at the local minimum of the potential field, and $R_i$ will be stuck. However, we conclude in Proposition~\ref{prop:intersection} that such local minimum does not exist if applying our proposed hybrid control scheme.

Noticeably, the input velocity $\|u_j\|$ for the searching state does not need to be bounded by the maximum speed in (\ref{equ:v_max}), because the robot has not yet started coverage operation. But in analysis, the velocity is made to be bounded by $v_{max}$ just to simplify the analysis process. Meanwhile, the robot acceleration is bounded mechanically according to (\ref{equ:a_max}) for searching state.

The linear control input (\ref{equ:deployment_vel_u}) is to motivate the robot $R_i$ to approach the goal $q_{g,i}$, while the angular control (\ref{equ:deployment_vel_w}) is to align the orientation of the robot to the direction of the collective potential field $U^*$, which is $\varphi_i$. The control law (\ref{equ:deployment_vel}) makes the robot with a unicycle model converge to the goal asymptotically from almost all initial conditions; its stability is proved in {\it Proposition 1} of \cite{valbuena2012hybrid}.
Considering that the robot may not arrive exactly at the designated goal position, or an oscillation may happen when the robot passes the goal position due to disturbance or inertia, we propose a dead zone value $\epsilon$ to terminate the searching state when $\|\textbf{q}_{g,j}-\textbf{q}_j\|<\epsilon$.

The reason why the repulsive potential component (\ref{equ:repulsive_APF_dep}) does not include those robots in rendezvous state $\mathcal{N}_r$ or searching state $\mathcal{N}_s$ is because the trajectory of the robot in boundary searching will not intersect with robots in rendezvous or the same searching state. This statement can be proved by contradiction in Proposition~\ref{prop:intersection}.

\begin{proposition}
\label{prop:intersection}
For a specific spill in the workspace, no intersection exists between the trajectories of any two robots if neither one is working under boundary tracking state. 
\end{proposition}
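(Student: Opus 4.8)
The plan is to argue by contradiction and to exhaust the combinations of non-tracking states the two robots can be in. Fix a spill $\mathcal{S}$; by hypothesis the two robots $R_i$ and $R_j$ are associated with it, hence with the same rendezvous point $D_m$ and the same shortest-path tree $T_m$ of the subgraph $A_m$, and neither is in the tracking state on the portion of its trajectory under consideration. Suppose for contradiction these (rendezvous- or searching-) portions meet at a point $p$. Three cases arise: (i) both robots in rendezvous, (ii) both in searching, (iii) one in rendezvous and one in searching. Case (i) follows from the structure recalled in Sec.~\ref{sec:rendezvous}: the hierarchical rendezvous (Algorithm~2 of \cite{parasuraman2018multipoint}) moves every robot monotonically along the branches of $T_m$ toward the root $D_m$, and the process is collision-free by construction; since $T_m$ is acyclic, two branch-following curves can only coalesce as they near $D_m$ and never cross transversally, so such a $p$ cannot exist — and if it did, it would contradict the collision-avoidance guarantee of \cite{parasuraman2018multipoint}.

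For case (ii), the key observation is that under the hierarchical scheme a robot switches to the searching state only upon reaching a neighborhood of $D_m$, so every searching trajectory associated with $\mathcal{S}$ emanates from the common point $D_m$. While a searching robot has no other robot inside its influence radius $d_0$, the control law (\ref{equ:deployment_vel}) is pure gradient descent of the attractive field $U_d$ in (\ref{equ:attractive_APF_dep}), whose negative gradient points straight at the goal; hence, up to the initial transient that aligns $\theta_i$ with $\varphi_i$, the trajectory is the straight segment from $D_m$ to the robot's goal $\textbf{q}_{g,i}\in\partial\mathcal{S}$, and the goal assignment gives distinct goals to distinct robots. Segments issuing from the common endpoint $D_m$ to distinct points can meet only at $D_m$, which is not an interior crossing, contradicting the existence of $p$. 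The only mechanism that could bend such a path is the repulsive term $\sum_{i\in\mathcal{N}_t}U_i$ in (\ref{equ:repulsive_APF_dep}) produced by \emph{tracking} robots, but that term only deflects a searching robot away from the stretch of $\partial\mathcal{S}$ a tracker occupies, i.e., outward within the same radial fan, and cannot steer one segment across another.

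Case (iii) is where I expect the real work. Here a searching robot $R_i$ has already arrived at $D_m$ and is moving outward toward $\textbf{q}_{g,i}\in\partial\mathcal{S}$, whereas a rendezvous robot $R_j$ is still converging toward $D_m$ along a branch of $T_m$. I would establish a spatial separation: $R_i$'s searching segment is, by definition of the goal, a shortest segment from $D_m$ to $\partial\mathcal{S}$ and therefore points \emph{into} the spill region, while $R_j$ — which by the standing assumption is never inside $\mathcal{S}$ and reaches $D_m$ along tree edges lying in the spill-free part of $\mathcal{W}$ — approaches $D_m$ from its exterior side; combined with the temporal fact that $R_i$ has already vacated whatever branch of $T_m$ it used (so $R_j$, being strictly below $D_m$ in $T_m$, occupies a part of the tree $R_i$ is no longer on) and with the property of Algorithm~2 that a child and its current parent always move in the same direction toward the root — never head-on — the outward fan swept by searching robots and the converging swath swept by rendezvous robots are disjoint except at $D_m$. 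Turning ``disjoint except at $D_m$'' into a clean argument is the crux: it rests on the acyclic, monotone structure of $T_m$ and on the collision-avoidance property already proved in \cite{parasuraman2018multipoint}, and it is the only place where the ``no robot lies inside a spill'' assumption is genuinely used. Once the three cases are closed, no such $p$ exists for any pair of non-tracking robots, which in particular justifies omitting $\mathcal{N}_r$ and $\mathcal{N}_s$ from the repulsive potential (\ref{equ:repulsive_APF_dep}).
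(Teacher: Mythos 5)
There is a genuine gap. The case you yourself flag as ``the crux'' --- a searching robot crossing a rendezvous robot --- is exactly the case the paper resolves, and it does so with a device your proposal never finds: a quadrilateral exchange argument. If the segment from $\textbf{q}_i$ to $\textbf{q}_{g,i}$ and the segment from $\textbf{q}_j$ to $\textbf{q}_{rend,j}$ crossed at a point $O$ (Fig.~\ref{fig:intersection}), the triangle inequality applied twice at $O$ gives $\overline{\textbf{q}_i\textbf{q}_{g,i}}+\overline{\textbf{q}_{j}\textbf{q}_{rend,j}}>\overline{\textbf{q}_{i}\textbf{q}_{rend,j}}+\overline{\textbf{q}_j\textbf{q}_{g,i}}$, so swapping destinations strictly shortens total travel; this contradicts the optimality of the assignment --- $R_j$ would do better heading to the boundary point $\textbf{q}_{g,i}$ than to $\textbf{q}_{rend,j}$ under (\ref{equ:opt_rend}), and would in any case leave the rendezvous state upon seeing the boundary. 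The same swap contradicts the nearest-goal selection of Algorithm~\ref{alg:B_searching} when both robots are searching, and robots within one rendezvous tree are collision-free by \cite{luo2019multi}. Your sketch of case (iii) instead tries to separate an ``outward fan'' from a ``converging swath'' geometrically and, by your own admission, does not close; without the exchange argument (or an equivalent) the proposition is not proved.

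Two of your structural premises are also false in the paper's setting, which undermines cases (i) and (ii) as written. First, two robots associated with the same spill need not share a rendezvous point or a tree: Table~\ref{tab:scenario} shows Spill~1 served by five distinct rendezvous points in Sec.~\ref{sec:case_2}, so your appeal to the acyclicity of a single $T_m$ does not cover cross-tree pairs in $\mathcal{N}_r$ --- the paper handles those precisely by noting that a crossing would make the other robot's rendezvous point the better choice under (\ref{equ:opt_rend}). Second, searching trajectories do not all emanate from one common $D_m$: in Case~1 every robot is its own rendezvous point, and the paper explicitly allows a child robot to switch to searching mid-rendezvous upon sighting the boundary, so your ``segments issuing from a common endpoint'' picture fails, whereas the swap argument applies to searching robots starting from arbitrary distinct positions. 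The correct skeleton --- contradiction plus case analysis over $\mathcal{N}_s$ and $\mathcal{N}_r$ --- is present in your proposal, but the load-bearing step is missing in every case.
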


\begin{proof}
This proposition is proved by contradiction. Without loss of generality, we assume that there is one intersection between the trajectories of robots $R_i \in\mathcal{N}_s$ and $R_j \in \mathcal{N}_r$, while $R_i$ is starting from its current positions $\textbf{q}_{i}$ and moving toward it goal position $\textbf{q}_{g,i}$, and $R_j$ is moving from its current position $\textbf{q}_j$ toward to its rendezvous point $\textbf{q}_{rend,j}$, respectively. The intersection is denoted as $O$, shown in Fig.~\ref{fig:intersection} (left). If connected with line segments, the four points form a quadrilateral. We can show that $\overline{O\textbf{q}_i}+\overline{O\textbf{q}_{g,i}}>\overline{\textbf{q}_i\textbf{q}_{rend,j}}$ and $\overline{O\textbf{q}_{j}}+\overline{O\textbf{q}_{rend,j}}>\overline{\textbf{q}_j\textbf{q}_{g,i}}$, thus we show $\overline{\textbf{q}_i\textbf{q}_{g,i}}+\overline{\textbf{q}_{j}\textbf{q}_{rend,j}}>\overline{\textbf{q}_{i}\textbf{q}_{rend,j}}+\overline{\textbf{q}_j\textbf{q}_{g,i}}$. If this conclusion stands, the better rendezvous point for $R_j$ will be $\textbf{q}_{g,i}$, which is on the boundary. Moreover, $R_j$ will not remain in rendezvous state for long as it can detect the boundary (i.e., point $\textbf{q}_{g,i}$). Meanwhile, since $R_i$ does not rendezvous to point $\textbf{q}_{rend,j}$, they do not reside in the same hierarchical rendezvous spanning tree which is described in \cite{parasuraman2018multipoint}, meaning that they do not have an intersected trajectories for being in different rendezvous spanning trees. Similarly, it can be proved that two robots working under searching state will not collide by replacing $\textbf{q}_{rend,j}$ with $\textbf{q}_{g,j}$.    Those two robots can have less total displacement if their goal positions are swapped, which obviously violates Algorithm~\ref{alg:B_searching}. Finally, no collision will happen between robots even if they are within the same rendezvous tree, according to \cite{luo2019multi}. Conclusively, there is no intersection of the trajectories of any two robots if none of them is in boundary tracking state. 

\begin{figure}[t]
\centering
\includegraphics[width=0.7\columnwidth]{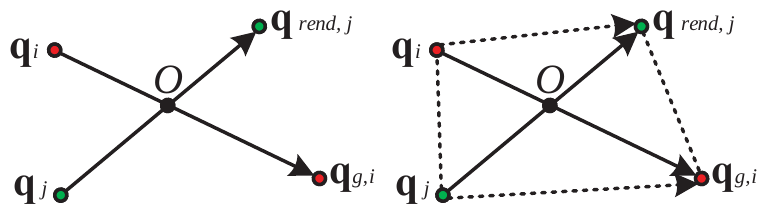}
\caption{A representative figure showing an intersection between trajectories of two robots $R_i$ and $R_j$.}
\label{fig:intersection}
\end{figure}
\end{proof}

Deadlock equilibrium is a common issue that hinders robot movement in a multi-robot situation. Although robots working in different states can avoid collision by eliminating a saddle point, as discussed above, they may still face a deadlock issue for unicycle models. Specifically, a robot with a unicycle model has to steer from the initial orientation while approaching to the goal position, whose trajectories can be seen in Fig.~\ref{fig:case_1}. This steering may exert conflicting forces upon adjacent robots towards opposite directions and hence causes a deadlock equilibrium.
To eliminate such deadlock, we can increase the scope range $d_0$ in (\ref{equ:repulsive_APF_dep}) to allow more room for steering, or stop robots before entering a computed collision zone and resume motion once the zone is clear \cite{soltero2011collision}. Alternatively, strategies based on robot trajectory re-planning such as \cite{qutub1997solve} or reciprocal velocity obstacle method \cite{van2008reciprocal} can also be utilized.

For an agent $R_i$ with a field of view (FOV) of span $(-\phi, \phi)$, an angular velocity control law needs to be determined such that the agent is aligned with the direction of the boundary, especially when the robot is already on the boundary but needs re-orientation, such as the cases shown in Fig.~\ref{fig:FOV_peri_1} and Fig.~\ref{fig:FOV_peri_2}. If the boundary does not initially reside in the FOV, this angular velocity control law is applied and the robot starts searching and aligning to the boundary. The searching state continues until the boundary is maintained within the field of tracking (FOT), a small view angle denoted $\pm\varepsilon_{\phi}$ and located within the FOV of robot, as shown in Fig.~\ref{fig:FOV_overall}. 
The FOT is an axial dead zone designed for angular control that avoids frequent oscillation when robot is advancing.
\begin{figure}[t]
\centering
\begin{tabular}{ccc}
\subfigure[FOV/FOT setting]{\label{fig:FOV_overall}
\includegraphics[width=0.135\textwidth]{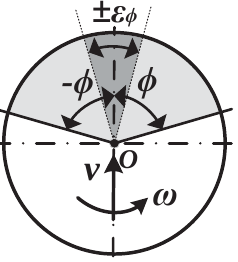}} &
\subfigure[$\partial\mathcal{S}$ search case 1]{\label{fig:FOV_peri_1}
\includegraphics[width=0.14\textwidth]{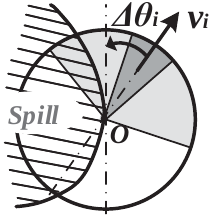}} &
\subfigure[$\partial\mathcal{S}$ search case 2]{\label{fig:FOV_peri_2}
\includegraphics[width=0.13\textwidth]{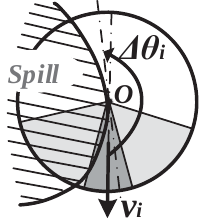}}
\end{tabular}
\caption{(a) Definition of agent's field of view (FOV), which is a section spanning $[-\phi, \phi]$ and highlighted with light shadow, and field of tracking (FOT), a section spanning $[-\varepsilon_\phi, \varepsilon_\phi]$ being highlighted with dark shadow. (b) and (c) are two common spill boundary $\partial\mathcal{S}$ searching cases with angular changes produced by control laws and algorithms.}
\end{figure}
Algorithm~\ref{alg:B_searching} presents the control algorithm of boundary searching in detail. Before boundary searching starts, few system internal states that may affect Algorithm~\ref{alg:B_searching} are decided beforehand as initialization in Algorithm~\ref{alg:B_searching_init}.

Persistent tracking of spill boundary $\partial\mathcal{S}$ is accomplished by Algorithm~\ref{alg:B_tracking} and lasts until the coverage work is done. If a robot $R_i$ worked in tracking boundary but suddenly misses the boundary from its vision possibly due to an overshoot or large disturbance, it needs to switch back to the searching state and detect the boundary again, as shown in Fig.~\ref{fig:state_transition}. Note that for a drastically changing environment where the spill boundary may escape from the vicinity of the robot and cannot be detected by vision sensors any longer, the robot team needs to enter rendezvous state and move to the boundary again. Under certain circumstance, the robot can remain in tracking state despite of a dynamic spill. More details can be found in Sec.~\ref{sec:dynamic}.

A special case exists that the initial boundary may be shrunk by robots rooted at rendezvous $D_i$ beyond it can be sensed by robots from $D_j, (j \neq i)$; thus, rendezvous robot $D_j$ loses its vicinity of the boundary. To circumvent this issue, we design two modes for a robot in tracking, namely {\it normal mode} and {\it idle mode}. The robot performs boundary shrink control only under normal mode, while tracking without shrinking the boundary in idle mode. One robot $R_i$ switches to idle mode when it resides within an effective range of a static rendezvous robot $D_j$ (with child robots associated), i.e., $\|\textbf{q}_i-\textbf{q}_{D_j}\| \leq \alpha \cdot r_{\mathcal{C}}, \alpha \in (0,1)$. Therefore, the boundary detected by $D_j$ will remain until rendezvous completes. This fact is as well reflected in Line 7$\sim$10 of Algorithm~\ref{alg:B_tracking}.

\begin{algorithm}[t]
\DontPrintSemicolon
Initialization {\it /*All flags clear to 0 */}\;
\Repeat{robot $R_i$ rotates in the same place for $2\pi$}{
    \If{$R_i$ detects itself sitting on the spill boundary $\partial\mathcal{S}$}{
        Set $on\_the\_boundary\_flag \leftarrow 1$\;
        \If{boundary $\partial\mathcal{S}$ resides in FOT}{
            Set $tracking\_enable\_flag \leftarrow 1$\;
            \Break
        }
    }
    \If{$\partial\mathcal{S}$ can be detected and is within the FOV}{
        Set $boundary\_within\_FOV\_flag \leftarrow 1$\;
        \Break
    }
    \If{no boundary $\partial\mathcal{S}$ is detected}{
        Set $no\_boundary\_detected\_flag \leftarrow 1$\;
        Set $u_i=0,\, w_i=0$\;
        \End \, {\it /*Terminate this program, as no spill is detected within vision range $r_{\mathcal{A}}$*/}\;
    }
}
\caption{Boundary $\partial\mathcal{S}$ Searching Initialization} 
\label{alg:B_searching_init}
\end{algorithm}

\begin{algorithm}[!h]
\Repeat{$tracking\_enable\_flag = 1$}{
    \If{$on\_the\_boundary\_flag = 0$}{
        \If{$boundary\_within\_FOV\_flag = 1$}{
            \Repeat{$on\_the\_boundary\_flag = 1$}{
                Identify the nearest point $\textbf{q}_{g,i}$ on $\partial\mathcal{S}$  to the robot\;
                Apply control law (\ref{equ:deployment_vel}) to drive robot to $\textbf{q}_{g,i}$\;
            }
        }
    }
    \ElseIf{$on\_the\_boundary\_flag = 1$}{
        The robot linear velocity control input $u_i = 0$\;
        \If{$boundary\_within\_FOV\_flag = 1$}{
            \If{spill is on the right hand side of $R_i$}{
                $R_i$ rotates counterclockwise with respect to its centroid for a degree of $\phi_{temp}=\omega_{max}\cdot T_c$ such that $\partial\mathcal{S}$ is outside the FOV\;
                Clear $boundary\_within\_FOV\_flag \leftarrow 0$\;
            }
            \Else{
                \Repeat{$\partial\mathcal{S}$ is within the FOT}{
                    Apply control laws (\ref{equ:angular_law_discrete}) and (\ref{equ:angular_law}) for direction alignment \,
                    {\it /* Since $\partial\mathcal{S}$ is within FOV and spill is on the left hand side of $R_i$*/\;}}
                    Set $tracking\_enable\_flag \leftarrow 1$
                }
        }
        \ElseIf{$boundary\_within\_FOV\_flag = 0$}{
            \Repeat{$\partial\mathcal{S}$ is within the FOV}{
                $R_i$ begins rotating counterclockwise with respect to its centroid for a degree of $\phi_{temp}$ \;
            }
            Set $boundary\_within\_FOV\_flag \leftarrow 1$\;
        }
    }
}
\Return {Switch to the Tracking state \, {\it /* End the program, and switch to the Tracking state in Algorithm \ref{alg:B_tracking} */\;}}
\caption{Boundary $\partial\mathcal{S}$ Searching Algorithm} 
\label{alg:B_searching}
\end{algorithm}

\begin{algorithm}[t]
\DontPrintSemicolon
The robot $R_i$ is located on the boundary of an associated spill\;
The velocity control input of $R_i$ follows constraints (\ref{equ:v_max}) and (\ref{equ:a_max})\;
\Repeat{forever}{
    \If {$\partial\mathcal{S}$ is within the FOV}{
        \If {$\partial\mathcal{S}$ is within the FOT}{
            Apply control law (\ref{equ:tracking_vel}) and (\ref{equ:u_i_final})\;        
            \If {$\|\textbf{q}_i-\textbf{q}_{D_j}\| \leq \alpha \cdot r_{\mathcal{C}}, D_j \in D$}{
                Enable {\it idle mode}\;
            }
            \Else{Enable {\it normal mode}\;
            }
        }
        \Else {Apply control laws (\ref{equ:angular_law_discrete}) and (\ref{equ:angular_law})\;}  
    }
    \Else{
        \Return Switch to the Searching state \quad{\it /* End the program, tracking failed, switch to the Searching state in Algorithm~\ref{alg:B_searching}*/}
    }
}
\caption{Boundary $\partial\mathcal{S}$ Tracking Algorithm} 
\label{alg:B_tracking}
\end{algorithm}

Special consideration should be given in determining the $\phi_{cm}$ in Algorithm~\ref{alg:B_searching}. On one hand, it should be as large as possible in order to enlarge the step size, making rotation complete in fewer operation cycles. But on the other hand, it should not be so large that the robot misses the FOV in one rotation. Evidence for this is found in Algorithm~\ref{alg:B_searching}, where the robot has to stop moving and indicate `search fails' if failing to detect the boundary in one full rotation. Therefore, we conclude that
$\phi_{temp}=\omega_{max}\cdot T_c \leq 2\phi$. Thus, $\omega_i$ is bounded by:
\begin{equation}
\omega_{max} = \frac{2\phi}{T_c}.
\label{equ:omega_max}
\end{equation}

If the boundary $\partial\mathcal{S}$ is in the FOV but not in the FOT, with the heading angle of agent being $\theta_i$, the angle error with respect to the desired angle $\theta_d(=0)$ is defined as $\theta_{ei}=\theta_d-\theta_i$. 
$\theta_d$ is the angle of spill boundary, which is detected using vision sensors of robots. 
\cite{Salamanca} presented a method to transform an input vision image into a bird-eye view image. Then, using the bird-eye view image, we can estimate $\theta_d$. How to estimate $\theta_d$ is beyond the scope of this paper.
One can apply the following proportional-derivative (PD) control law to align the forward direction of the robot with the desired angle $\theta_d$:
\begin{equation}
{w}_i = \begin{cases} 
-K_p \cdot \theta_{ei} - K_d\cdot \dot{\theta}_{ei}, \\
\qquad\qquad \text{if $\partial\mathcal{S}$ appears in sector $[-\phi, -\varepsilon_\phi)\cup(\varepsilon_\phi, \phi]$,} \\
0, \qquad\quad \text{if $\partial\mathcal{S}$ appears in sector $[-\varepsilon_\phi, \varepsilon_\phi]$,}
\end{cases} 
\label{equ:angular_law_continuous}
\end{equation}
where $K_p$ and $K_d$ are proportional and derivative
gains, respectively.

Let $T$ denote the current time when the control is applied.
Considering the fact that the robot control is discrete in time, the following discrete control law is adopted instead of (\ref{equ:angular_law_continuous}):
\begin{equation}
{w}_i = \begin{cases} 
-K_p \cdot \theta_{ei}(T) - \frac{K_d}{T_c}\big(\theta_{ei}(T)-\theta_{ei}(T-T_c)\big), \\
\qquad\qquad \text{if $\partial\mathcal{S}$ appears in sector $[-\phi, -\varepsilon_\phi)\cup(\varepsilon_\phi, \phi]$,} \\
0, \qquad\quad
\text{if $\partial\mathcal{S}$ appears in sector $[-\varepsilon_\phi, \varepsilon_\phi]$.}
\end{cases} 
\label{equ:angular_law_discrete}
\end{equation}

Finally, by considering the upper limit of angular speed shown in (\ref{equ:omega_max}), we update the controller to be:
\begin{equation}
\|w_i\| = \begin{cases} 
\|K_p \cdot \theta_{ei}(T) + \frac{K_d}{T_c}\big(\theta_{ei}(T)-\theta_{ei}(T-T_c)\big)\|, \\ 
\qquad\qquad\qquad\qquad\qquad\qquad\qquad \text{if $\|w_i\| \leq \omega_{max}$,} \\
\omega_{max}, \qquad\qquad\qquad\qquad\qquad\quad
\text{otherwise.}
\end{cases} 
\label{equ:angular_law}
\end{equation}

Specifically, if the boundary $\partial\mathcal{S}$ is outside the FOV, the proposed control laws are able to rotate the robot until $\partial\mathcal{S}$ falls into the FOV. However, if the $\partial\mathcal{S}$ resides in FOV but the spill is on the right hand side of the robot, that means the robot is heading the wrong direction. Hence, the robot has to rotate counterclockwise until it catches sight of $\partial\mathcal{S}$ again, as illustrated in Fig.~\ref{fig:FOV_peri_2}.
This procedure is presented at Line 10$\sim$13 of Algorithm~\ref{alg:B_searching}.
Note the rotation process outside FOV is open-loop based and aims for no specific goal. The proposed $\omega_{max}$ guarantees the minimum possible steps to finish searching. When the $\partial\mathcal{S}$ is once again within the FOV, control laws (\ref{equ:angular_law_discrete}) and (\ref{equ:angular_law}) can be applied. Both (\ref{equ:angular_law_discrete}) and (\ref{equ:angular_law}) are based on a vision sensor feedback available at low cost in terms of implementation. If more accurate and consistent control is needed for more complex environments, one can refer to the visual servoing methods proposed in \cite{sattar2005visual,lots20012d}. 

\subsection{Boundary Tracking Controller and Convergence Proof}
\label{sec:APF}
When a group of robots is performing boundary shrinking for spill coverage along the boundary, a proper control method needs to be proposed to avoid collision between robots but still motivate them to move forward. Such a controller based on the APF method is provided and discussed in this section.

The robots are re-labeled after the completion of allocation. For a set of sequential indices, they are assumed to be associated with a specific spill. Two indices of robots are sequential means they are adjacent, such as $R_{i+1}$ and $R_i$. Such re-labeling is considered as well in the analysis of  Sec.~\ref{sec:stability}. 

\begin{figure}[t]
\centering
\includegraphics[width=0.99\columnwidth]{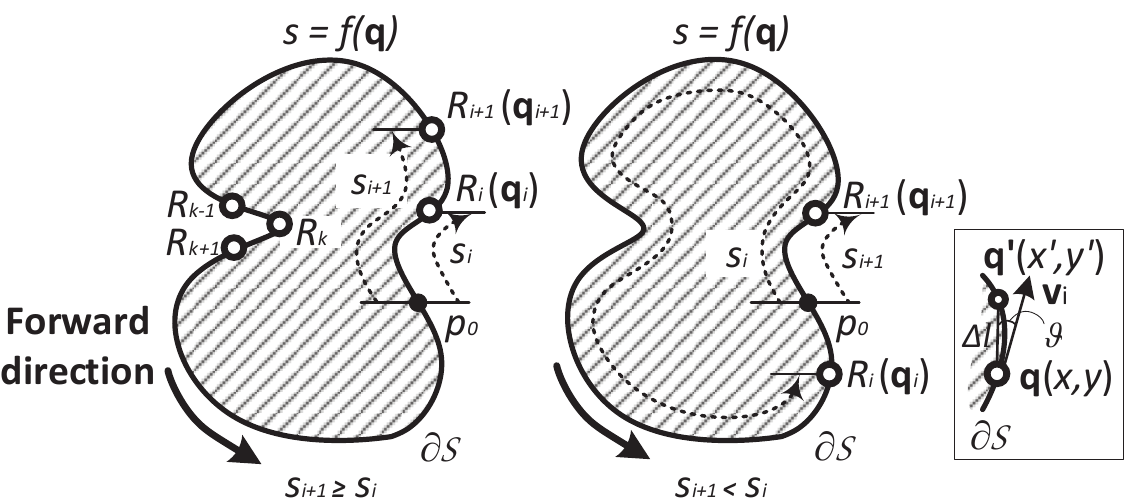}
\caption{The left figure shows two subgroups of adjacent robots, one subgroup consisting of $R_{k-1}$, $R_{k}$, and $R_{k+1}$, and the other consisting of $R_{i}$ and $R_{i+1}$. Both the left and middle figures illustrate the definition of $s_i$ and $s_{i+1}$ for the trailing robot $R_{i}$ and its leading robot $R_{i+1}$, respectively. The right figure in the box shows the symbols and their geometrical relationship when a robot is maneuvering along the boundary $\partial\mathcal{S}$.}
\label{fig:l_s_define}
\end{figure}

Since workspace $\mathcal{W}$ is assumed to be obstacle-free, attractive potential $U_{att}$ exerted on $R_i$ by its leading robot $R_{i+1}$ is sufficient to drive $R_i$. When $R_i$ is tracking the boundary $\partial\mathcal{S}$, it can measure the arc length in boundary rather than line distance between itself and other agents within its FOV. 
It uses arc length in boundary because a closer line distance between two robots does not mean that they are adjacent if tracking a serpentine boundary.
For instance, in Fig.~\ref{fig:l_s_define}, the adjacent robot of $R_{k-1}$ is $R_{k}$ rather than $R_{k+1}$, although $R_{k+1}$ is closer to $R_{k-1}$ in a line distance.
Practically, such measurement can be performed with LIDAR, high resolution laser range-finders, or vision sensors equipped with imaging techniques proposed in \cite{gowal2011two,mitiche2014computer}. Suppose that a function $s = f(\textbf{q})$ can represent the spill boundary $\partial\mathcal{S}$, where $s \in [0, \|\partial\mathcal{S}\|]$ demonstrates the arc length between a reference point and an agent located at $\textbf{q}$ in the counterclockwise direction. 
Here, $\|\partial\mathcal{S}\|$ indicates the total length of $\partial\mathcal{S}$.
Note that the definition of $s$ and a starting point $p_0$ are obtained from a global view and used only for stability and convergence analyses.
We can show a pair of adjacent robots $R_i$ and $R_{i+1}$ in Fig.~\ref{fig:l_s_define}, where  $R_{i+1}$ is the leading robot of $R_{i}$. Once $R_{i+1}$ falls in the FOV of its following robot $R_{i}$, arc length distance between these two adjacent robots, namely $l_i=\|\textbf{q}_i, \textbf{q}_{i+1}\|_{\partial\mathcal{S}}$, can be computed by
\begin{equation}
l_i = \begin{cases}
s_{i+1}-s_i, & \text{if}\, s_{i+1} \geq s_i, \\
s_{i+1}-s_i+\|\partial\mathcal{S}\|, & \text{if}\, s_{i+1} < s_i.
\end{cases}
\label{equ:l_i}
\end{equation}
$s_i$ is defined as the $s$ in the view of $R_i$, and $p_0$ in Fig.~\ref{fig:l_s_define} is the reference point for $s$. Recall that $r_{\mathcal{A}}$ is the vision sensor range and is illustrated in Fig.~\ref{fig:seaswarm}. $s_{i+1}\leq s_i$ can happen depending on where the reference point is, which is shown in Fig.~\ref{fig:l_s_define}. 
If the leading robot of $R_i$ is beyond the radius $r_{\mathcal{A}}$ of its FOV, (\ref{equ:l_i}) does not apply. Therefore it defines a virtual distance $l_i^*$ which includes both situations as below
\begin{equation}
l_i^* = \begin{cases}
l_i, & \text{if}\, l_i \leq r_{\mathcal{A}}, \\
r_{\mathcal{A}}, & \text{if}\, l_i > r_{\mathcal{A}} \, \text{or $l_i$ is unknown}.
\end{cases}
\label{equ:l_i_star}
\end{equation}

The attractive potential function is defined as 
\begin{equation}
U_{att} = \frac{1}{2}\xi_3 {l_i^*}^2
\label{equ:U_att_1}
\end{equation}
where $\xi_3$ is a positive scaling factor. 
The velocity $\textbf{v}_i$ should be in the direction of negative gradient of $U_{att}$ with respect to $s_i$ such that 
\begin{equation}
\textbf{v}_i = -\nabla_{s_i} U_{att} = \xi_3l_i^*.
\label{equ:U_att_1_v}
\end{equation}
However, extra factors need to be considered when applying (\ref{equ:U_att_1_v}) as the control input.

\subsubsection{Consider the maximum linear speed}
It is worth noticing that the robot cannot exceed the maximum linear speed $v_{max}$. Thus, we should have 
\begin{equation}
\textbf{v}_i = \begin{cases}
\xi_3l_i^*, & \text{if}\, \|\textbf{v}_i\| \leq v_{max}, \\
\frac{\xi_3l_i^* v_{max}}{\|\xi_3l_i^*\|}, & \text{otherwise}.
\end{cases}
\label{equ:u_i_final}
\end{equation}

\subsubsection{Control design}
\label{sec:control_design}
Controllers similar to (\ref{equ:u_i_final}) are developed in the works such as \cite{zhang2013spill}; however, they do not consider or fully discuss the control design under unicycle model (\ref{equ:kine_1}). Here, we provide a strategy inspired by \cite{pickem2015gritsbot} to determine the velocity control input tailored for unicycle model. Note that the robot acceleration is bounded mechanically according to (\ref{equ:a_max}) for boundary searching and tracking states.

Let $(x,y)$ present the current position of $R_i$.
Assuming a point $(x',y')$ lying on $\partial\mathcal{S}$ and in front of the robot $R_i$ with a small length offset of $\Delta l$, we have its global coordinate as $x' = x + \Delta l\cos (\vartheta+
\theta_i)$ and $y' = y + \Delta l\sin (\vartheta+
\theta_i)$, having $\vartheta$ as the angle between orientations of the robot and the potential field $U_{att}(x,y)$. Recall that $\theta_i$ is the orientation direction of the robot $R_i$.
The aforementioned symbols and their geometrical relationship can be found in Fig.~\ref{fig:l_s_define} (in the box), where the orientation of the potential field is represented as $\textbf{v}_i$. 
Practically, $(x',y')$ can be sensed by local vision sensors of $R_i$.
Let $\textbf{Q} = [u_i,w_i]^T$ be the velocity control input and $\textbf{v}_i = [v_x^i,v_y^i]^T$ in (\ref{equ:u_i_final}), the velocity becomes as follow:
\begin{equation}
\begin{bmatrix} \dot{x'} \\ \dot{y'} \end{bmatrix} = \begin{bmatrix}  v_x^i \\ v_y^i \end{bmatrix} = \begin{bmatrix} \cos{\vartheta} & -\sin{\vartheta} \\ \sin{\vartheta}  & \cos{\vartheta} \end{bmatrix} \begin{bmatrix} 1 & 0 \\ 0  & \Delta l \end{bmatrix} \begin{bmatrix} u_i \\ w_i \end{bmatrix} = \textbf{G}(\vartheta,\Delta l)\textbf{Q}.
\label{equ:control_design}
\end{equation}
Then the control input can be obtained by $\textbf{Q} = \textbf{G}^{-1}\textbf{v}_i$, i.e.,
\begin{subequations}
\label{equ:tracking_vel}
\begin{align}
u_i &= v_x^i\cos\vartheta + v_y^i\sin\vartheta, \label{equ:tracking_vel_u}\\
w_i &= \frac{1}{\Delta l}(-v_x\sin\vartheta + v_y\cos\vartheta).  \label{equ:tracking_vel_w}
\end{align}
\end{subequations}
Note that the controller (\ref{equ:tracking_vel}) is bounded using  (\ref{equ:u_i_final}), where the the velocity set $\textbf{v}_i = [v_x^i,v_y^i]^T$ in (\ref{equ:tracking_vel}) is derived from (\ref{equ:u_i_final}).
In the case of coarse measurement or irregular spill shapes such that (\ref{equ:l_i}) cannot be obtained with precision, cooperative Kalman filters presented in \cite{zhang2010cooperative} can be utilized to estimate the boundary curvature and determine the control law, when a noisy scalar field is built for the spill.

\subsubsection{Speed convergence}
\label{sec:speed_conv}
When $l_i^* \rightarrow 0$, we need to show the convergence of linear speed. By differentiating $U_{att}(\textbf{q}_i)$ in (\ref{equ:U_att_1}) with respect to time $t$ and considering (\ref{equ:u_i_final}), we have 
\begin{equation}
\dot{U}_{att} = -\xi_3 \dot{l}_i^* = -\xi_3^2l_i^* = -2\xi_3{U}_{att}.
\label{equ:U_att_1_diff}
\end{equation}
From (\ref{equ:U_att_1_diff}), it shows 
\begin{equation}
{U}_{att} = {U}_{att}(0)e^{-2\xi_3^2t}.
\label{equ:U_att_1_res}
\end{equation}
Obviously, the maximum potential $U_{att}=\frac{1}{2}\xi_3 r_{\mathcal{A}}^2$ exists if the distance of boundary between robot $R_i$ and its leader $R_{i+1}$ are far enough. Therefore, it is preferable if $R_{i+1}$ travels beyond the FOV of $R_i$. Moreover, the greater $\xi_3$ is, the faster $U_{att}$ and $l_i^*$ converge to 0. One last significant remark is, from control laws (\ref{equ:u_i_final}) and (\ref{equ:angular_law}), we can see that one trailing robot $R_i$ cannot wrap up its leading robot $R_{i+1}$, because along the boundary, the attractive potential decreases to {\it zero} as $R_i$ is approaching $R_{i+1}$, thus linear velocity input also decreases to {\it zero}.

\subsubsection{Robot convergence after boundary shrink}
\label{sec:consensus}
When coverage is coming to an end and boundary shrinks to almost a point, all the robots associated with a specific spill should then form a loop and converge around a place, as long as the spill remains convex during coverage. One might ask whether the robots will reduce their speed while moving to the endpoint and eventually stop when they have arrived. A positive conclusion is given in the discussion in Sec.~\ref{sec:speed_conv}, which shows that the linear speed $\|u_i\|$ for every agent converges to {\it zero} when $l_i^*$ approaches {\it zero}. The whole process stops when the robot cannot find any more spill, or, more specifically, when robot FOV still misses boundary $\partial\mathcal{S}$ after a full rotation under searching state. This implies that the spill have been fully cleaned. Practically, a complete coverage can be achieved when the conveyor belts of robots, as shown in Fig.~\ref{fig:seaswarm}, are crowding toward each other. The dimensions of the robot may affect the convergence result, since the robots' dimensions prevent them from getting closer to each other. Nevertheless, this issue can be tackled by improving robot design, for instance, a retractable or flexible  conveyor belt can be adopted for robots in Fig.~\ref{fig:seaswarm}. The stability of multi-robot coordination control in coverage is detailed in Sec.~\ref{sec:stability}.
The information about nonconvex spills is provided in Sec.~\ref{sec:nonconvex_spill}.

\subsection{Stability Analysis}
\label{sec:stability}
We investigate the stability of robot coordination in coverage control. At a certain time $t=T_+$ after long operation, we assume all robots deployed are in the tracking state, and each of them has $l_i\leq r_{\mathcal{A}}$, meaning that it has a leading robot within its view and resides within the view of a trailing robot. Defining the $n_j$ number of robots allocated for spill $\mathcal{S}_j$ as $R_1^{j}, R_2^{j},..., R_{n_j}^{j}$, the boundarys for spills $[\|\partial\mathcal{S}_1\|,\|\partial\mathcal{S}_2\|,...,\|\partial\mathcal{S}_M\|]_{t=T_+}$, and $\textbf{l}^{*} = [l^{*}_1,l^{*}_2,...,l^{*}_{N}]^T_{t=T_+}$ at the current time, the system collective potential energy is provided by
\begin{equation}
V(\textbf{s})=V(\textbf{l}^*)=\sum^M_{j=1}\bigg(\sum^{n_j}_{i=1} U_{att}(i) \bigg).
\label{equ:coordi_stability}
\end{equation}
Here, $\textbf{s}=[s_1,s_2,...,s_N]^T$, and $U_{att}(i)$ is the attractive potential shown in (\ref{equ:U_att_1}). 

We define a sparse matrix $Z = [z_{ij}]_{M \times N}$ to represent the association of each robot to a specific spill, where $z_{ij} = 1$ means robot $R_i$ is associated to spill $\mathcal{S}_j$. The matrix $Z$ also satisfies constraint (\ref{equ:allocation_const}).
\begin{equation} \label{equ:allocation_const}
\sum_{i=1}^{N} z_{ij} \geq 1, \quad \sum_{j=1}^{M} z_{ij} = 1, \quad \forall j\in \{1, 2,..., M\},
\end{equation}
Obviously, the collective potential (\ref{equ:coordi_stability}) has to satisfy a boundary constraint as shown below
\begin{equation}
Z\cdot \textbf{l}^{*} = [\|\partial\mathcal{S}_1\|,\|\partial\mathcal{S}_2\|,...,\|\partial\mathcal{S}_M\|]^T_{t=T_+}.
\label{equ:coordi_stability_condition}
\end{equation}

Using (\ref{equ:coordi_stability}) and the result of Sec.~\ref{sec:speed_conv}, we have the time derivative of V(\textbf{s}) as
\begin{equation*}
\begin{aligned}
\dot{V}(\textbf{s}) &= \frac{dV_1}{ds_1}\cdot \frac{ds_1}{dt} + \frac{dV_2}{ds_2}\cdot \frac{ds_2}{dt} +...+ \frac{dV_N}{ds_N}\cdot \frac{ds_N}{dt} \\
&=-\xi_3 \textbf{l}^{*}\cdot \textbf{v} \leq 0
\end{aligned}
\end{equation*}
where $\textbf{v}=[v_1, v_2,..., v_N]^T$. Thus, we show that the Lyapunov function $V(\textbf{s})$ is negative semi-definite, and $\dot{V}=0\,\, iff\, \textbf{l}^{*}=0$, i.e., $s_i = s_{i+1}$ for any two adjacent robots tracking the same spill and $\|\partial\mathcal{S}_i\|=0$, according to (\ref{equ:l_i}), (\ref{equ:l_i_star}), and (\ref{equ:coordi_stability_condition}). 
Meanwhile, the velocity $v_i$ is bounded by (\ref{equ:u_i_final}) and it decays with $l_i^*$.
Therefore, according to LaSalle's invariance principle \cite{lasalle1976stability}, those robots deployed to a specific spill will asymptotically converge to a place when the spill boundary shrinks to a point, i.e., the spill is removed.

\subsection{Dynamic Spill Applicability}
\label{sec:dynamic}

\begin{figure}
    \centering
    \includegraphics[width=0.6\columnwidth,trim={0 0 0 0},clip]{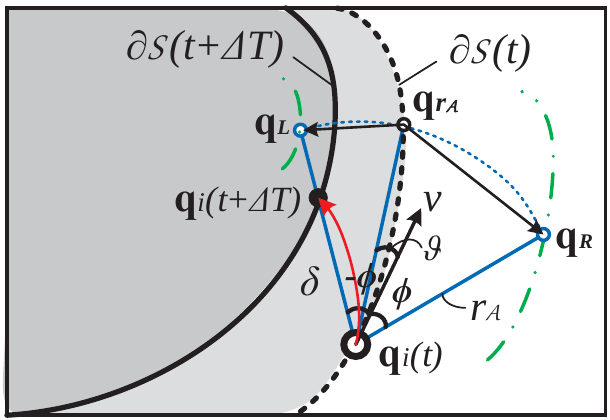}
    \caption{The trajectory (red line) of robot $R_i$ working in the boundary tracking state when the boundary is changing. The dashed black curve is the boundary at time $t$ while the solid black curve is the boundary in $t+\Delta T$. The blue lines with a span $(-\phi, \phi)$ indicates the field of view (FOV) of robot. The dash-dotted lines in green represent the critical location of boundary escaping the robot vision range from left and right side, respectively.}
    \label{fig:dynamic_boundary}
\end{figure}

In real cases, the boundary of a spill cannot be always static during the operation. In this section, we prove that our proposed boundary shrink control strategy can also handle a case when the spill boundary is moving. 

The multi-robot rendezvous algorithm deals with moving boundary intrinsically by motivating the rendezvous robot $D_m$ to move according to the boundary. Therefore, the child robots will eventually gather in the vicinity of the changing boundary. Such behavior that has a dynamic rendezvous point is named as robotic {\it herding} and validated in \cite{parasuraman2018multipoint}.

In boundary searching state, our solution can handle moving boundary case by simply remaining in this state and rerunning Algorithm~\ref{alg:B_searching}, provided that the boundary does not move faster than robot maximal speed $v_{max}$ in the case that the boundary escapes from robot vision range $r_\mathcal{A}$ when the robot is chasing the goal position on the boundary. 

If boundary moving happens in the boundary tracking state, then our Algorithm~\ref{alg:B_tracking} can tackle this issue, even without switching to boundary searching state if the changing boundary appears inside the FOV, as shown in Fig.~\ref{fig:dynamic_boundary}. 
Since the changing boundary appears inside robot FOV, the trajectory of robot working in the boundary tracking state and a dynamic boundary is shown in the red line in Fig.~\ref{fig:dynamic_boundary}. In the same figure, robot's current position is $\textbf{q}_i(t)$, and the goal position on the dynamic boundary for the robot to track denotes $\textbf{q}_i(t+\Delta T)$, whose coordinate can be calculated as
\begin{equation}
    \textbf{q}_i(t+\Delta T) = \begin{bmatrix}  x(t+\Delta T) \\ y(t+\Delta T) \end{bmatrix} = \begin{bmatrix} \delta\cos{(\phi+\theta_i)} + x(t) \\ \delta\sin{(\phi+\theta_i)} + y(t) \end{bmatrix},
\label{equ:pythagorean}
\end{equation}
where $[x(t), y(t)]^T$ is the coordinate of $\textbf{q}_i(t)$, and $\delta$ is the distance between $\textbf{q}_i(t)$ and $\textbf{q}_i(t+\Delta T)$ which can be determined by image processing techniques such as \cite{saxena2006learning}. 
Using control law (\ref{equ:deployment_vel}) by replacing $\textbf{q}_g$ in it with $\textbf{q}_i(t+\Delta T)$, the robot can track the dynamic boundary.

For a robot working under boundary tracking state, as shown in Fig.~\ref{fig:dynamic_boundary}, the spill boundary can escape the robot vision range from either left or right side if $\textbf{q}_{r_\mathcal{A}}$, the furthest point on the boundary that is within robot vision range, moves over point $\textbf{q}_{L}$ or $\textbf{q}_{R}$. Given this fact, we can decide the maximum speed for the moving spill, $v_{max}^{\partial\mathcal{S}}$, such that the boundary always resides in robot's FOV. $v_{max}^{\partial\mathcal{S}}$ can be determined as $v_{max}^{\partial\mathcal{S}} = \min\{\frac{\|\textbf{q}_{L}-\textbf{q}_{r_\mathcal{A}}\|}{T_c}, \frac{\|\textbf{q}_{R}-\textbf{q}_{r_\mathcal{A}}\|}{T_c}\} = \min\{\frac{(\phi-\vartheta)r_\mathcal{A}}{T_c}, \frac{(\phi+\vartheta)r_\mathcal{A}}{T_c}\}$. Here, recall that $T_c$ is the robot control system cycle time.

\section{Simulation Results}
\label{sec:experiments}

To evaluate the proposed boundary shrink control strategy, extensive experiments using simulation were conducted in Robotarium MATLAB platform \cite{pickem2017robotarium} which is an open source swarm robotics experiment testbed. Robotarium also supports hardware experiments with GRITSbot robots. 
Robotarium has a 2D arena of size $3\text{m}\times 3\text{m}$, which can be used to simulate a scale-down aquatic workspace such as lakes and oceans where spills are patched. In the experiments, four spills with different geometric features are presented, including two circles with different areas, one ellipse and one quadrilateral, as shown in Fig.~\ref{fig:simu_envrion_setting}. The diverse geometry of presented spills will help to validate the capability of the proposed method. In addition, Table~\ref{tab:spill_areas} shows the index and area of the spill patches, and $v_{max}$ in (\ref{equ:v_max}) is made equal to $0.01\,\text{m/s}$ to cope with this testbed.

\begin{figure}[t]
\centering
\includegraphics[width=1.2\columnwidth,trim={10cm 0 5cm 0},clip]{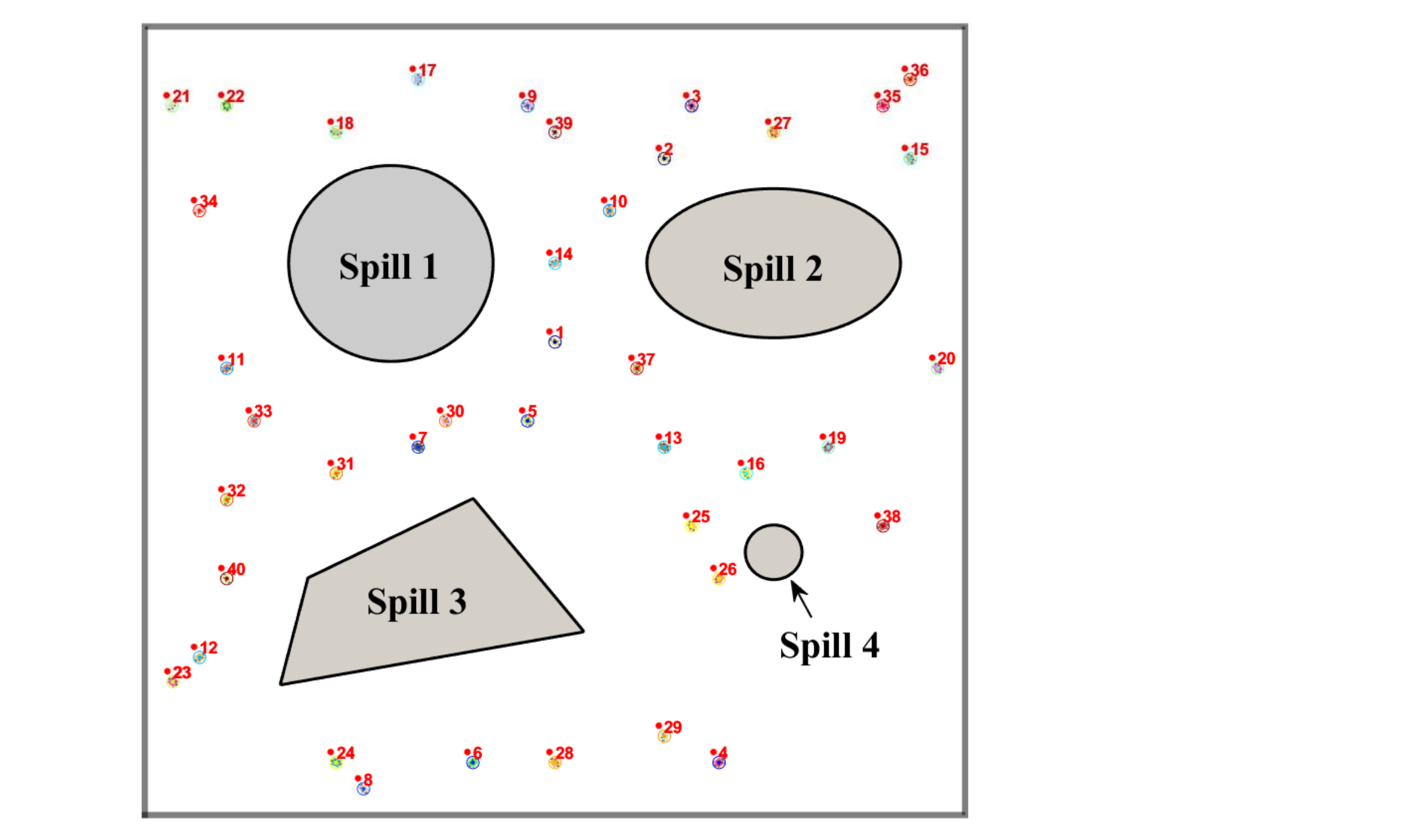}
\caption{The workspace for experiments with 4 spills featuring different shapes and areas. Randomly distributed robots are also indexed.}
\label{fig:simu_envrion_setting}
\end{figure}

\begin{table}[t]
\begin{center}
\caption{Geometry of the spills in the workspace $\mathcal{W}$.}
\label{tab:spill_areas}
  \begin{tabular}{|c |c |c| c| c|}
    \hline
    \bfseries Geometry & $Spill 1$ & $Spill 2$ & $Spill 3$ & $Spill 4$  \\ 
    \hline
    $Area (m^2)$ &  0.4301 & 0.4046 & 0.4200 & 0.0314 \\
    $boundary (m)$ & 2.3247 & 2.3926 & 2.8415 & 0.6283 \\
   \hline
   \end{tabular}
\end{center} 
\vspace{-0.2cm}
\end{table}

\subsection{Evaluation Metrics}
\label{sec:metrics}
The following three metrics were used to evaluate the performance of the proposed coverage control strategy in our experiments. Note that the units in simulations are in accord with Robotarium testbed, which is a scaled-down environment.
\begin{enumerate}
\item Lyapunov candidate function (Convergence):
\begin{equation}
L_x = \sum\limits_{R_i,R_j \in \mathcal{S}_x, i\neq j}  \|\textbf{q}_i - \textbf{q}_j\|.
\label{equ:convergence}
\end{equation}
If there is only one robot allocated for a spill $\mathcal{S}_x$, the Lyapunov candidate function is revised as:
\begin{equation}
L_x = \sum\limits_{R_i \in \mathcal{S}_x}  \|\textbf{q}_i - \textbf{c}_x\|, 
\label{equ:convergence_2}
\end{equation}
where $\textbf{c}_x$ is the computational centroid of the patch.
\item Total distance traveled by all robots after covering each spill (Distance):
\label{equ:distance}
\begin{equation}
D_{sum} = \sum\limits_{\mathcal{S}_x \in \mathcal{W}} \sum\limits_{k=1}^{k_{max}}  \|\textbf{q}_i(k+1) - \textbf{q}_i(k)\| .
\end{equation}
\item Number of iterations ($k_{stop} \leq k_{max}$) to reach the following stop condition:\\
Stop at $k_{stop}$ if the current area which is bounded by instant boundary satisfies
\begin{equation}
A_x \leq A_{min}, \, \forall \mathcal{S}_x \in \mathcal{W}. 
\label{equ:stop}    
\end{equation}
Here $A_{min}$ is defined to be 1\% of the original area of each spill. 
\end{enumerate}

\subsection{Experiment Cases and Results}
\label{sec:scenarios}
We validate our proposed solution in terms of efficacy, efficiency, scalability, and convergence. First, we select a case where a relatively large vision sensing range $r_{\mathcal{A}} = 1\text{m}$ is adopted such that every robot can detect the boundary within its vision range. Therefore, every robot itself becomes a rendezvous point $D_i, i \leq N$ and is designated to a specific spill by following a nearest-neighbor method. 
This case is to show that a larger vision range helps reduce rendezvous hierarchy and hence the total operation time.
The robots association to a spill is shown in Table~\ref{tab:scenario}. 
Meanwhile, as a contrast, in the second case, a smaller vision sensing range and a certain wireless communication range are chosen. In the second case, we use the same number of robots as the first case. We verify the performance of the proposed boundary shrink control method where a hierarchical rendezvous takes effect before boundary shrink control starts.

We validate our proposed strategy with many other experiments in terms of different operation ranges, different numbers of robots, etc. To be concise, the detailed results such as robot initial distributions and trajectories for each spill are omitted. Instead, the effects resulted by these factors are discussed in a comprehensive way in Sec.~\ref{sec:further_disc}.
All the experiments can be seen in complementary videos.

\subsubsection{Case 1 $(N = 40, r_{\mathcal{A}} = 1\text{m})$}
\label{sec:case_1}
In this case, the boundary shrink control with 40 robots deployed was tested. Fig.~\ref{fig:case_1_a} shows the random initial distribution of the 40 robots. Since the large vision sensing range $r_{\mathcal{A}}$ allows each robot being the rendezvous point, they have no child robot associated and can directly switch to boundary searching and tracking states. No hierarchical rendezvous or inter-robot communication is needed. Assume each spill in the workspace is detected by at least one robot, all the spills can be covered eventually. After removing the spill, the robots ultimately should converge at a point near the centroid of the spill as the boundary shrinks to zero. 
As documented in the top field of Table.~\ref{tab:scenario}, every spill was associated with many robots, except for Spill 4. The area evolution chart in Fig.~\ref{fig:case_1_c} shows that the completion progress of coverage was steadily declining. Since Spill 4 was detected with only one robot, its Lyapunov candidate function was defined according to (\ref{equ:convergence_2}) while other spills have Lyapunov candidate function defined according to (\ref{equ:convergence}). The convergence figure in Fig.~\ref{fig:case_1_d} also shows the sum of the Lyapunov candidate values of all the spills, i.e., $ \sum_{\mathcal{S}_x \in \mathcal{W}} L_x$, and therefore describes the overall performance of robot convergence in the workspace. 
According to the convergence figure, all the robots converged to their endpoints while covering, which was confirmed by the Lyapunov candidate function $L_x$ approaching to {\it zero}. In addition, because of the physical dimensional restraint of robots, the sum of convergence in this case did not amount to {\it zero} when the task approached its end. 
As depicted in Fig.~\ref{fig:case_1_peri}, the boundary, evaluated as perimeter, was shrinking asymptotically while spill coverage is in progress.
Fig.~\ref{fig:case_1_b} shows the trajectories of the 40 robots during boundary shrink control. From Fig.~\ref{fig:case_1_b}, we can see that twisted trajectories developed in $R_{13}$ and $R_{32}$ during the searching period. This happened because $R_{13}$ and $R_{32}$ were trying to avoid colliding with approaching robots $R_{16}$ and $R_{11}$, which were working in the tracking state and hence prioritized over the searching state in collision. This validated the effectiveness of our proposed boundary shrink control strategy in terms of collision avoidance.

\begin{figure}[!t]
\vspace{-0.5cm}
\centering
\begin{tabular}{cc}
\hspace{-0.5cm}
\subfigure[]{\label{fig:case_1_a}
\includegraphics[width=0.25\textwidth,trim={0 0 1cm 0},clip]{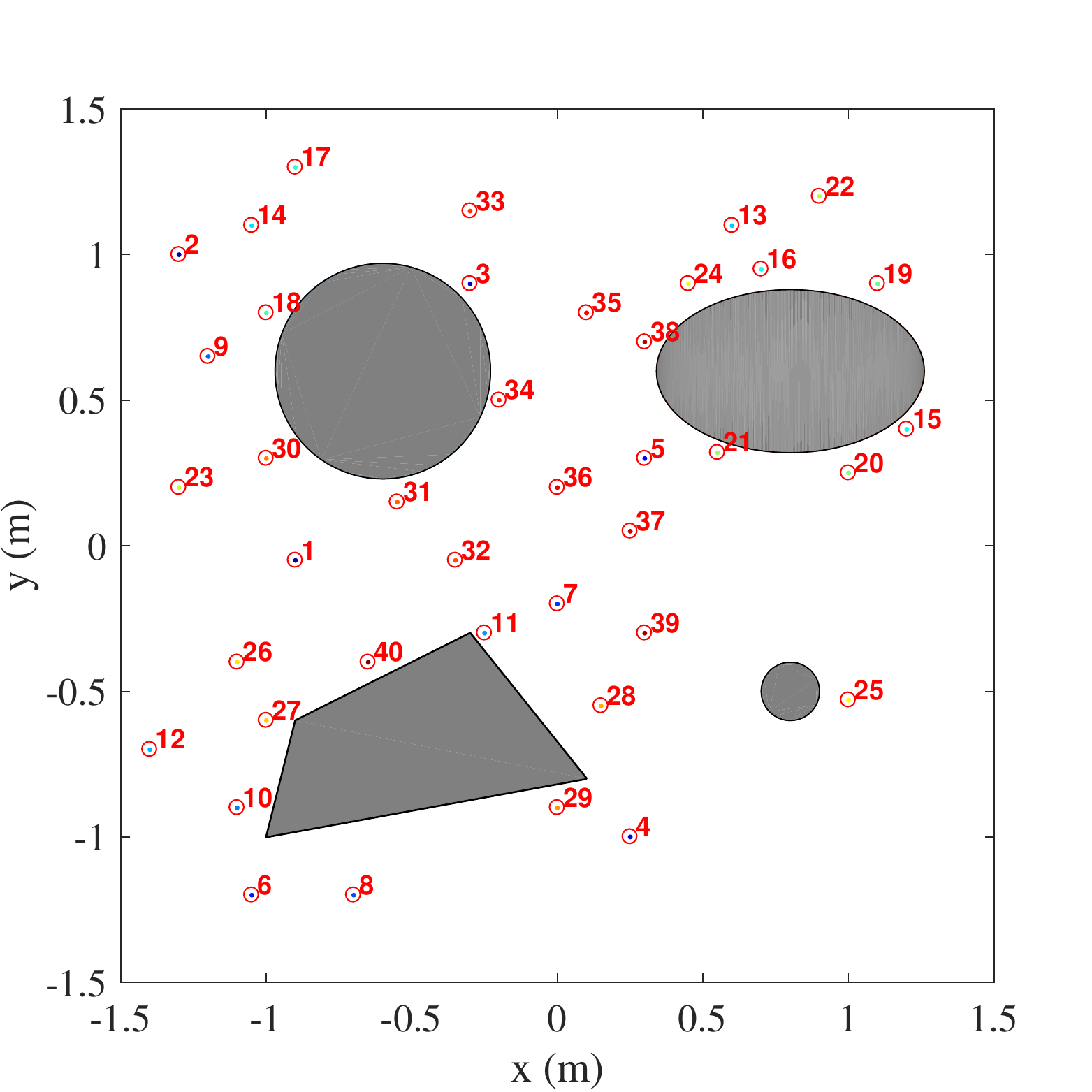}} &
\hspace{-0.5cm}
\subfigure[]{\label{fig:case_1_b}
\includegraphics[width=0.23\textwidth,trim={2.8cm -1.8cm 2.5cm 0.5cm},clip]{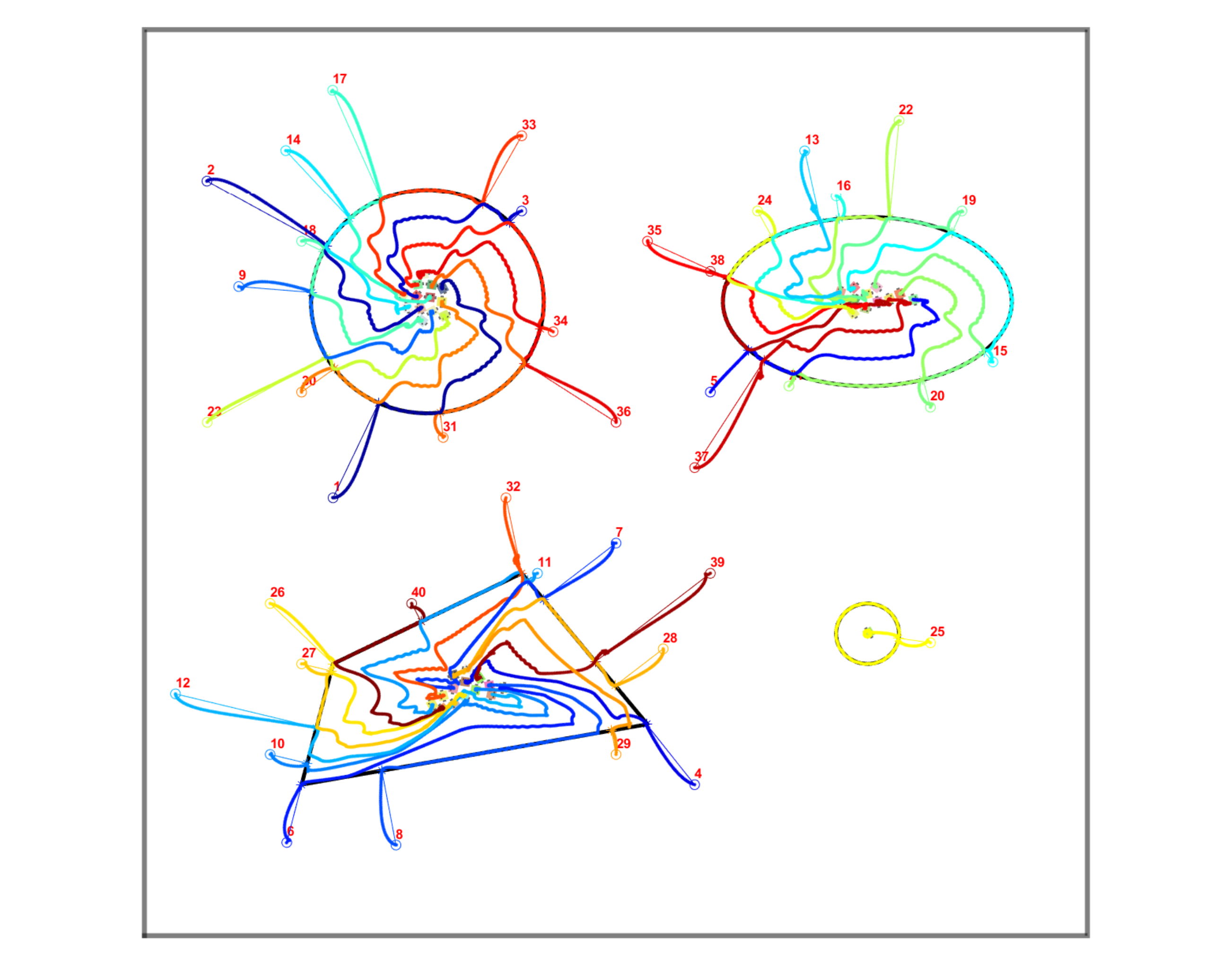}} 
\vspace{-0.3cm}\\
\hspace{-0.7cm}
\subfigure[]{\label{fig:case_1_c}
\includegraphics[width=0.24\textwidth,trim={0 0 13.3cm 0},clip]{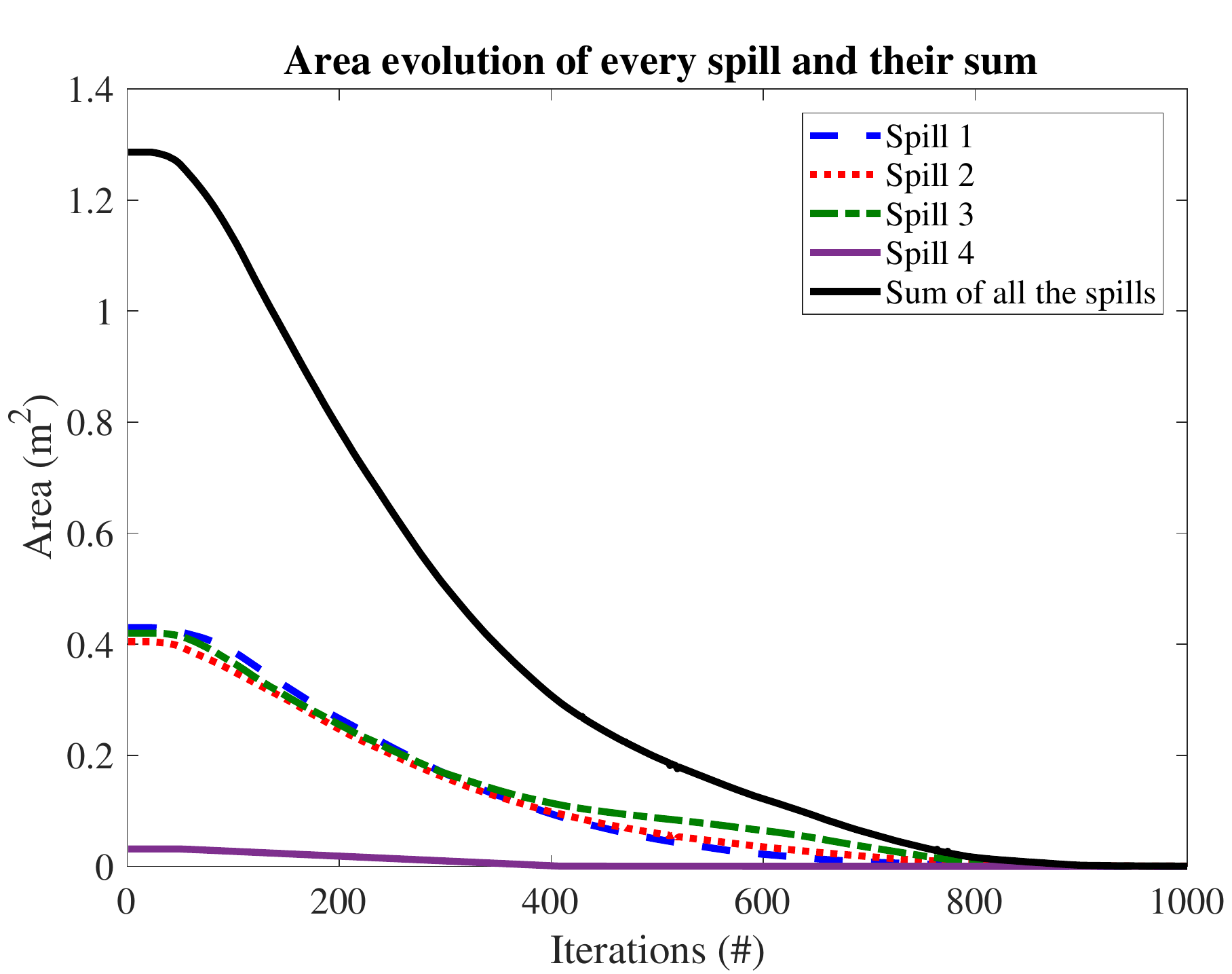}} &
\hspace{-0.9cm}
\subfigure[]{\label{fig:case_1_d}
\includegraphics[width=0.24\textwidth,trim={0 0 13.3cm 0},clip]{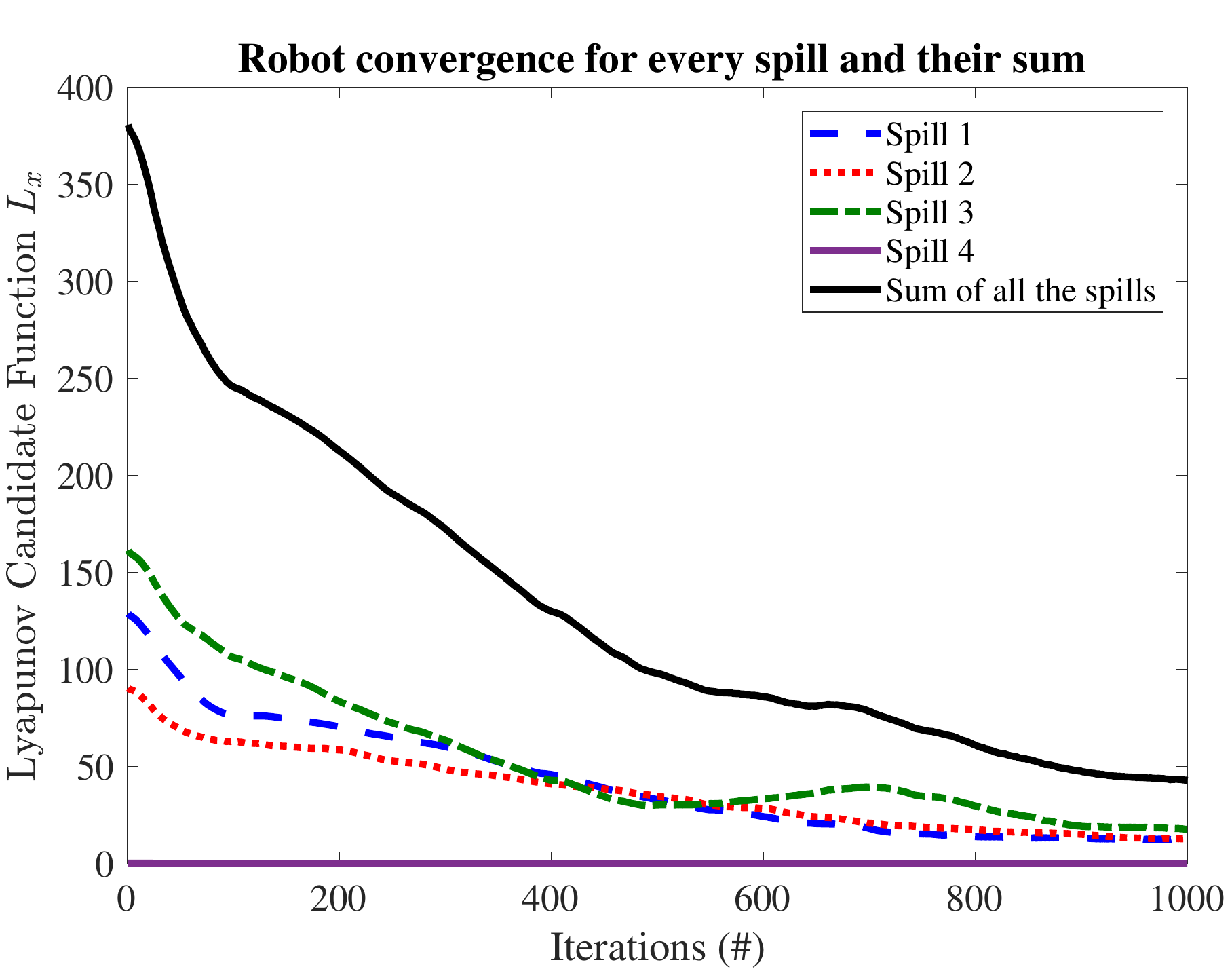}} 
\end{tabular}
\caption{(a) shows the initial random distribution of robots while (b) illustrates the coverage trajectories for Case 1 with 40 working robots. The area evolution and convergence with respect to time (iterations) are shown in (c) and (d), respectively.}
\label{fig:case_1}
\end{figure}

\begin{figure}[!t]
\vspace{-0.5cm}
\centering
\begin{tabular}{cc}
\hspace{-0.4cm}
\subfigure[]{\label{fig:case_1_peri}
\includegraphics[width=0.24\textwidth,trim={0 0 13.3cm 0},clip]{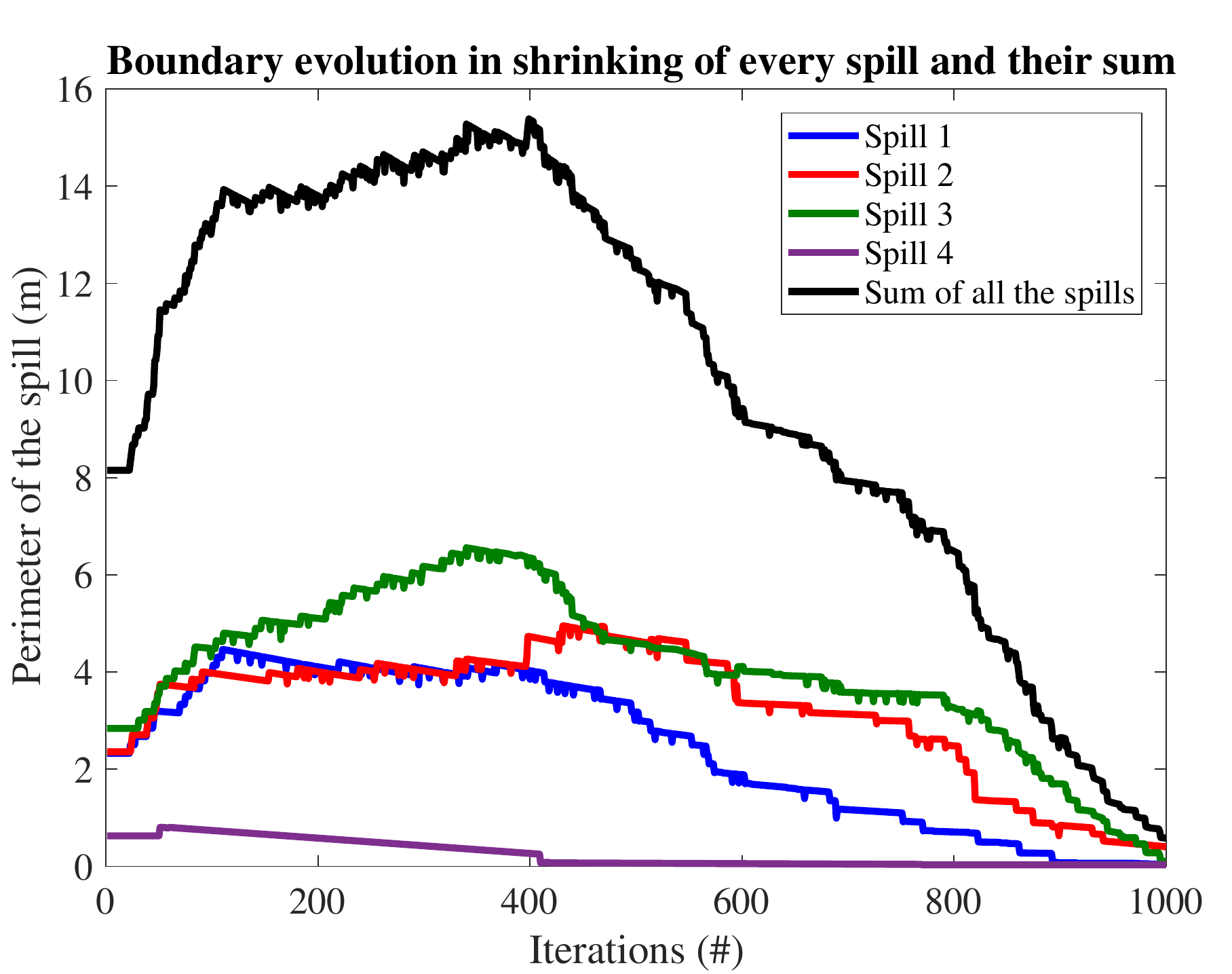}} &
\hspace{-0.6cm}
\subfigure[]{\label{fig:case_2_peri}
\includegraphics[width=0.24\textwidth,trim={0 0 13.3cm 0},clip]{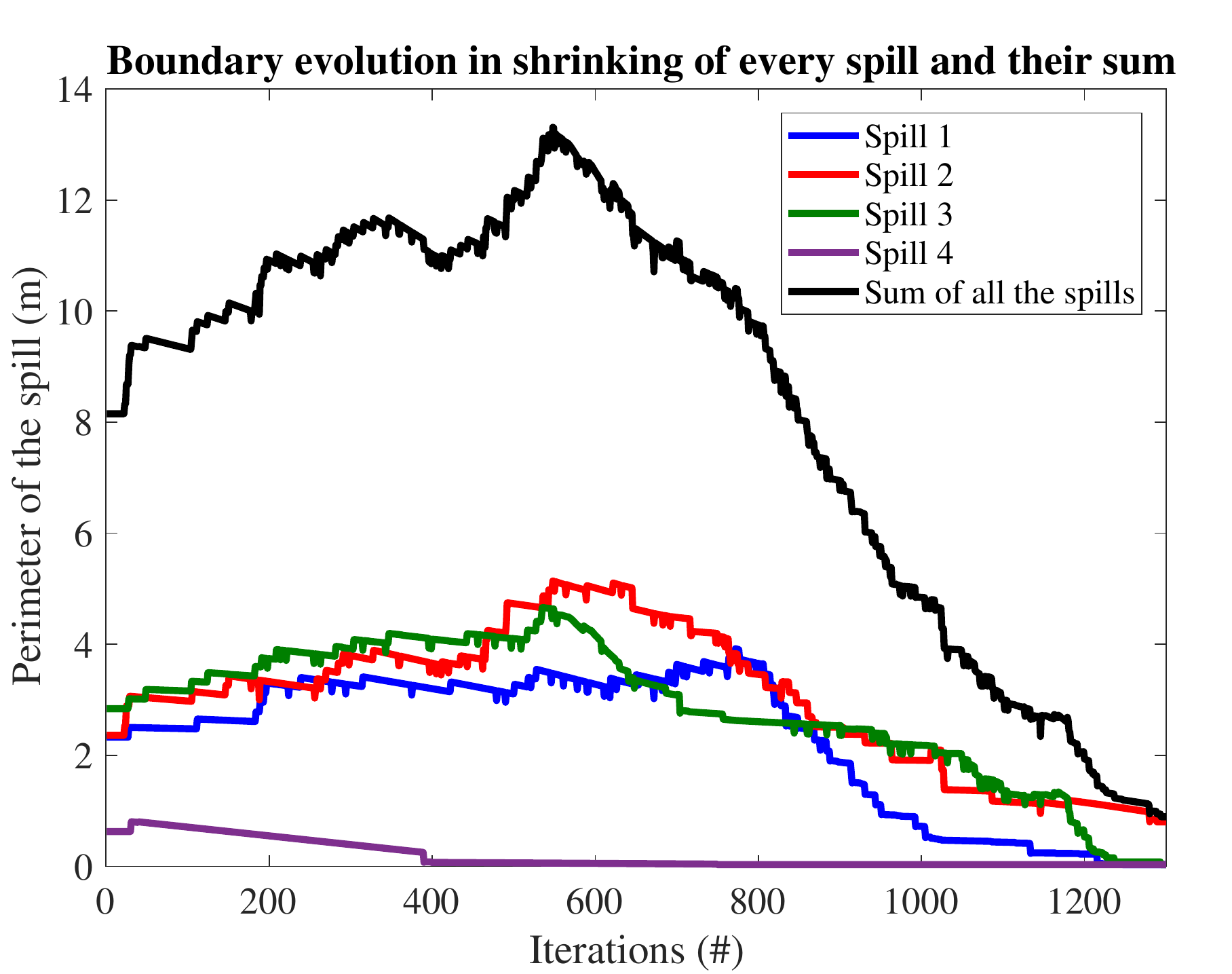}} 
\end{tabular}
\caption{The perimeter evolution is depicted separately of Case 1 (a) and Case 2 (b).}
\label{fig:peri_2_cases}
\end{figure}

\begin{table*}[!h]
\begin{center}
\caption{Performance of boundary shrink control in Case 1 and 2 having {\it N} = 40 robots, with corresponding maximum iterations $k_{max}$.}
\label{tab:scenario}
  \begin{threeparttable}[b]  
  \begin{tabular}{|c |c |c |c| c| c|}
    \hline
    \bfseries Case & \bfseries Metrics & \boldmath$Spill 1$ & \boldmath$Spill 2$ & \boldmath$Spill 3$ & \boldmath$Spill 4$  \\ [0.5ex] 
   \hline \hline
   \multirow{5}{12em}{\boldmath$Case \, 1$ \\ \boldmath$r_{\mathcal{A}} = 1\text{m}$ \\ \unboldmath$k_{max} = 1,000$} & $Residual\, area \, at\, k_{max} (m^2)$ &  $7.85\times 10^{-5}$ & $7.97\times 10^{-5}$ & $7.19\times 10^{-4}$ & $7.85\times 10^{-5}$ \\
    & $Completeness \, at\, k_{max} (\%)$ & 99.96 & 99.98 & 99.83 & 99.75 \\
    & $Number \, of \, allocated \, robots$ & 
     \begin{tabular}{@{}c@{}c@{}}13 \end{tabular}
       &  \begin{tabular}{@{}c@{}c@{}}12 \end{tabular}
       & \begin{tabular}{@{}c@{}c@{}}14 \end{tabular}  
       & 1 \\
    & $k_{stop} \, at \, A_{min} \, (\#)$ & 750 & 842 & 841 & 410 \\
  	& $D_{sum} (m)$& 2319.1
  &  2838.5 & 2712.8 & 23.6 \\
  \hline\hline
    \multirow{6}{12em}{\boldmath$Case \, 2$ \\ \boldmath$r_{\mathcal{A}} = 0.13\text{m}, r_{\mathcal{C}} = 0.3\text{m}$ \\ \unboldmath$k_{max} = 1,300$} & $Residual\, area \, at\, k_{max} (m^2)$ &  $2.85\times 10^{-4}$ & $7.94\times 10^{-5}$ & $7.85\times 10^{-5}$ & $7.85\times 10^{-5}$ \\
    & $Completeness \, at\, k_{max} (\%)$ & 99.93 & 99.98 & 99.98 & 99.75 \\
    & $Number \, of \, allocated \, robots$ & 13  & 12 & 14 & 1 \\
    & $Rendezvous \, point \, D_m \, (indices)$ & \begin{tabular}{@{}c@{}c@{}}5\,(\#3, \#18, \\ \#30, \#31, \#34) \end{tabular}
       &  \begin{tabular}{@{}c@{}c@{}}7\,(\#15, \#16, \#19, \\ \#20, \#21, \#24, \#38) \end{tabular}
       & \begin{tabular}{@{}c@{}c@{}}5\,(\#10, \#11, \\ \#27, \#29, \#40) \end{tabular}  
       & 1\,(\#25) \\ 
    & $k_{stop} \, at \, A_{min} \, (\#)$ & 924
  &  1093 & 1115 & 390 \\
  	& $D_{sum} (m)$ & 2420
  & 2283.2 & 2494.9 & 24.7 \\
  \hline 
   \end{tabular}
   \end{threeparttable}
\end{center}
\end{table*}

\subsubsection{Case 2 $(N = 40, r_{\mathcal{A}} = 0.13\text{m}, r_{\mathcal{C}} = 0.3\text{m})$}
\label{sec:case_2}
With a small vision range but relatively wide wireless communication range, the robots that are far from the spill can maneuver to its vicinity and catch sight of the boundary with the help of hierarchical rendezvous based on wireless connectivity. The initial robot distribution is set to be identical to Case 1, which can be found in Fig.~\ref{fig:case_1_a}. In this case, 18 shortest path trees are constructed according to Sec.~\ref{sec:rendezvous} with rendezvous points being $D_1(R_{3}), D_2(R_{18}), ...,\, \text{and} \, D_{18}(R_{25})$, representing the robots that already detect the boundary at first. The rendezvous path trees are shown in Fig.~\ref{fig:tree}. The numerical results including robot association to these spills, rendezvous points $D_m$, and experiment data are shown at the second row of Table~\ref{tab:scenario}. 

\begin{figure}
    \centering
    \includegraphics[width=0.95\columnwidth,trim={0 0 0 0},clip]{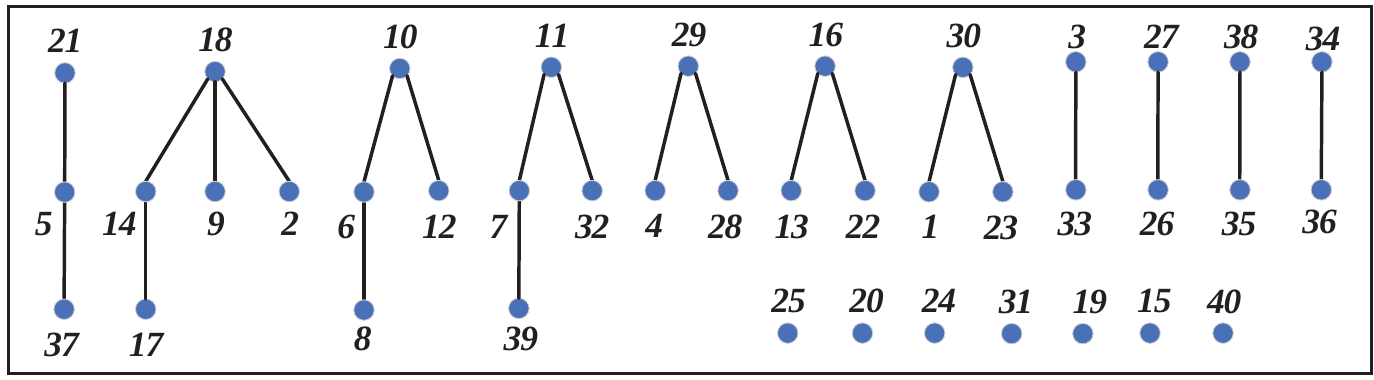}
    \caption{The shortest path trees that depict the hierarchical rendezvous, where the root node for each tree is a rendezvous point.}
    \label{fig:tree}
\end{figure}

The convergence curve was smoother but less sharp at the beginning in this scenario than in Case 1, as shown in Fig.~\ref{fig:case_2_d}, since multi-point rendezvous was taking effect but did not yield a faster convergence in general. Meanwhile, since part of the operation time was spent on rendezvous, which ended up until iteration $t=298$ in the experiment, the total coverage operation took longer time than that of Case 1. The boundary shrink control started after each robot moved and saw the boundary within its vision range, which happened right after robot completed rendezvousing to its corresponding rendezvous point. The spill areas were decreasing monotonically as shown in Fig.~\ref{fig:case_2_c} during the boundary shrink control, with operation range being $d=0.090\text{m}$. The robot trajectories are shown in Fig.~\ref{fig:case_2_b}, the child robot such as $R_{28}$ switched to boundary searching state at the time it met the spill boundary while approaching its rendezvous points $R_{29}$. 

The rendezvous hierarchy at the beginning of rendezvous is illustrated in Fig.~\ref{fig:tree}, which can be updated with local sensing. In Fig.~\ref{fig:tree}, robots such as $R_{37}$, $R_{17}$, etc., are leaf robots, meaning that they have no child robot associated. 
Noticeably, robots can switch to boundary searching state whenever they see the boundary within vision range, even if it happens before completion of rendezvous. This fact was verified by agents such as $R_{23}$, which first saw the spill boundary when it was moving toward $R_{30}$ and switched to boundary searching state as it had no child robot.
The parent robots except for rendezvous points (e.g. $R_{6}$ and $R_{14}$) can move while keeping the connectivity with their child robots (i.e., $R_{17}$ and $R_{8}$). 
Robots that become rendezvous points start moving to the boundary and perform boundary shrink control unless they have no child robot, such as $R_{25}$ and $R_{40}$, or all their child robots have switched away from rendezvous state.

\begin{figure}[!t]
\vspace{-1.1cm}
\centering
\begin{tabular}{cc}
\hspace{-0.5cm}
\subfigure[]{\label{fig:case_2_a}
\includegraphics[width=0.25\textwidth,trim={0 0 1cm 0},clip]{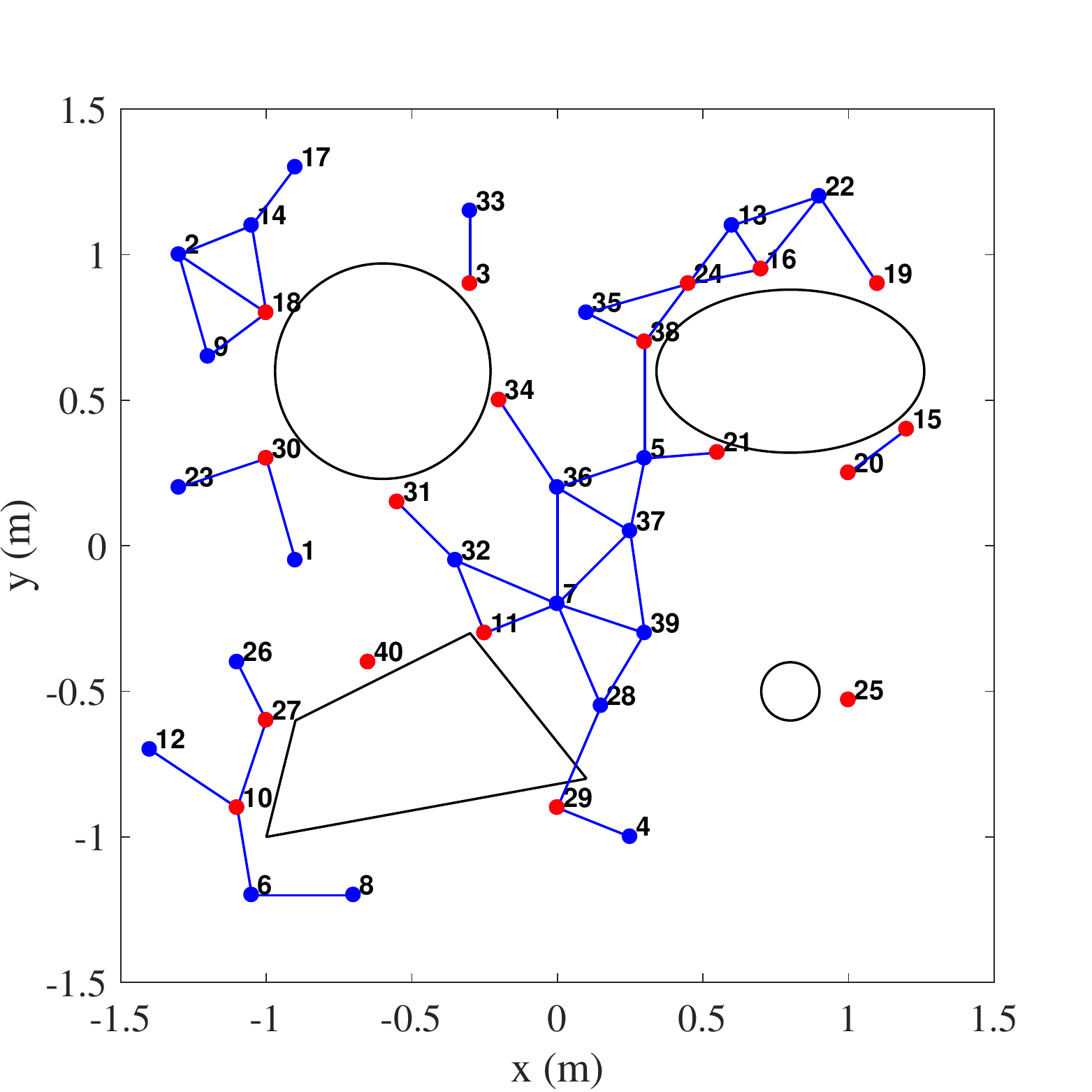}} &
\hspace{-0.5cm}
\subfigure[]{\label{fig:case_2_b}
\includegraphics[width=0.23\textwidth,trim={2.8cm -1.8cm 2.5cm 0.5cm},clip]{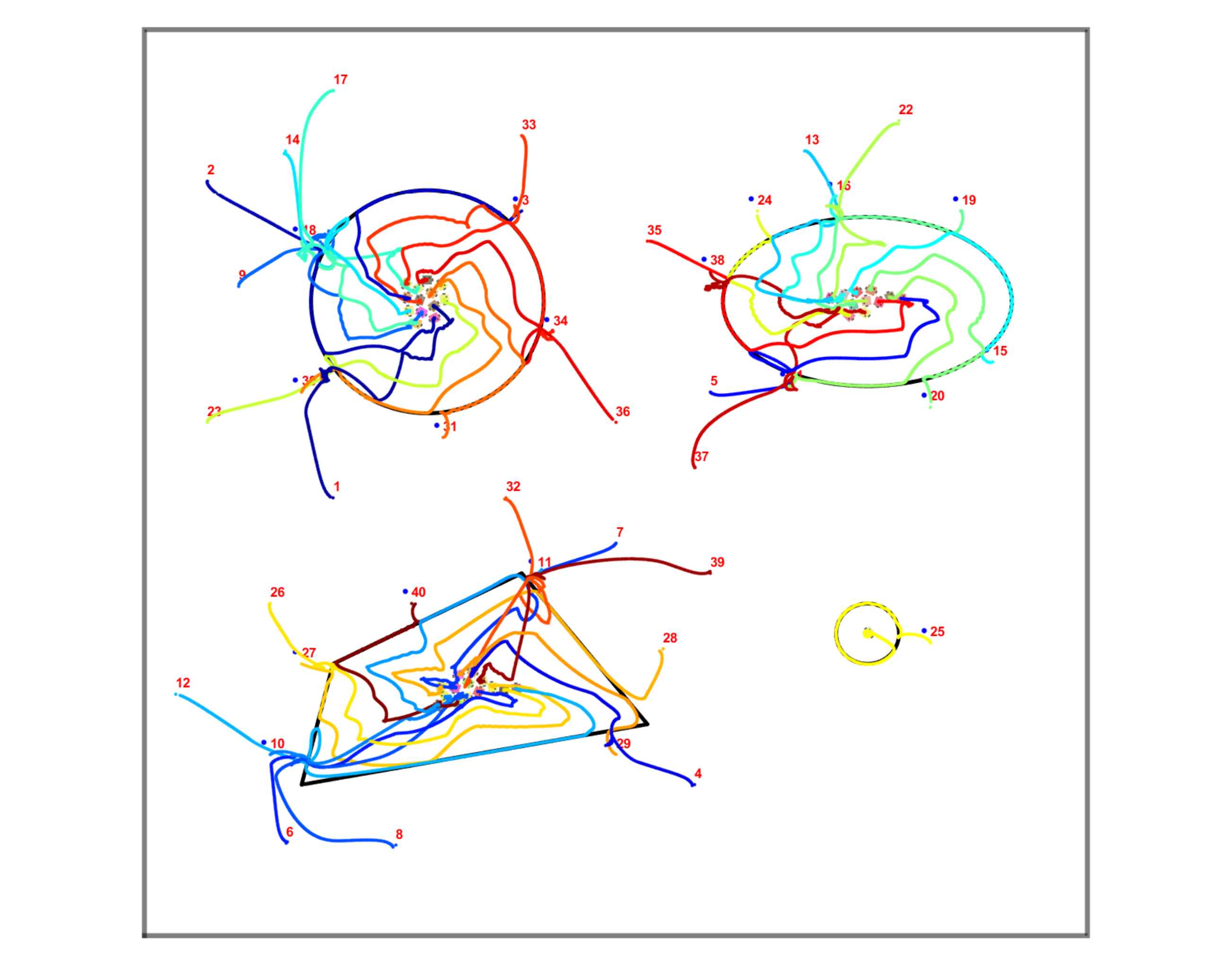}} 
\vspace{-0.3cm}\\
\hspace{-0.7cm}
\subfigure[]{\label{fig:case_2_c}
\includegraphics[width=0.24\textwidth,trim={0 0 13.3cm 0},clip]{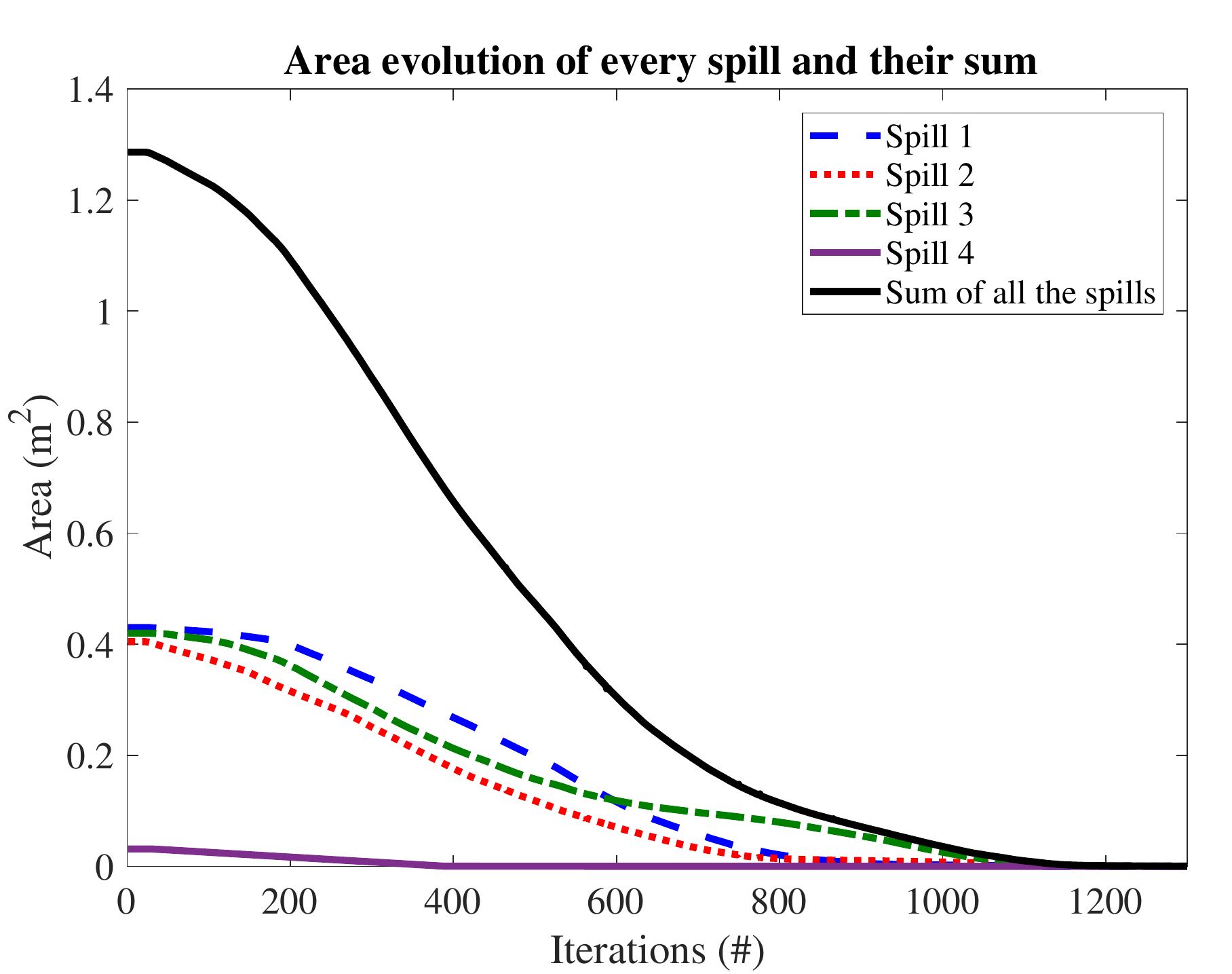}} &
\hspace{-0.9cm}
\subfigure[]{\label{fig:case_2_d}
\includegraphics[width=0.24\textwidth,trim={0 0 13.3cm 0},clip]{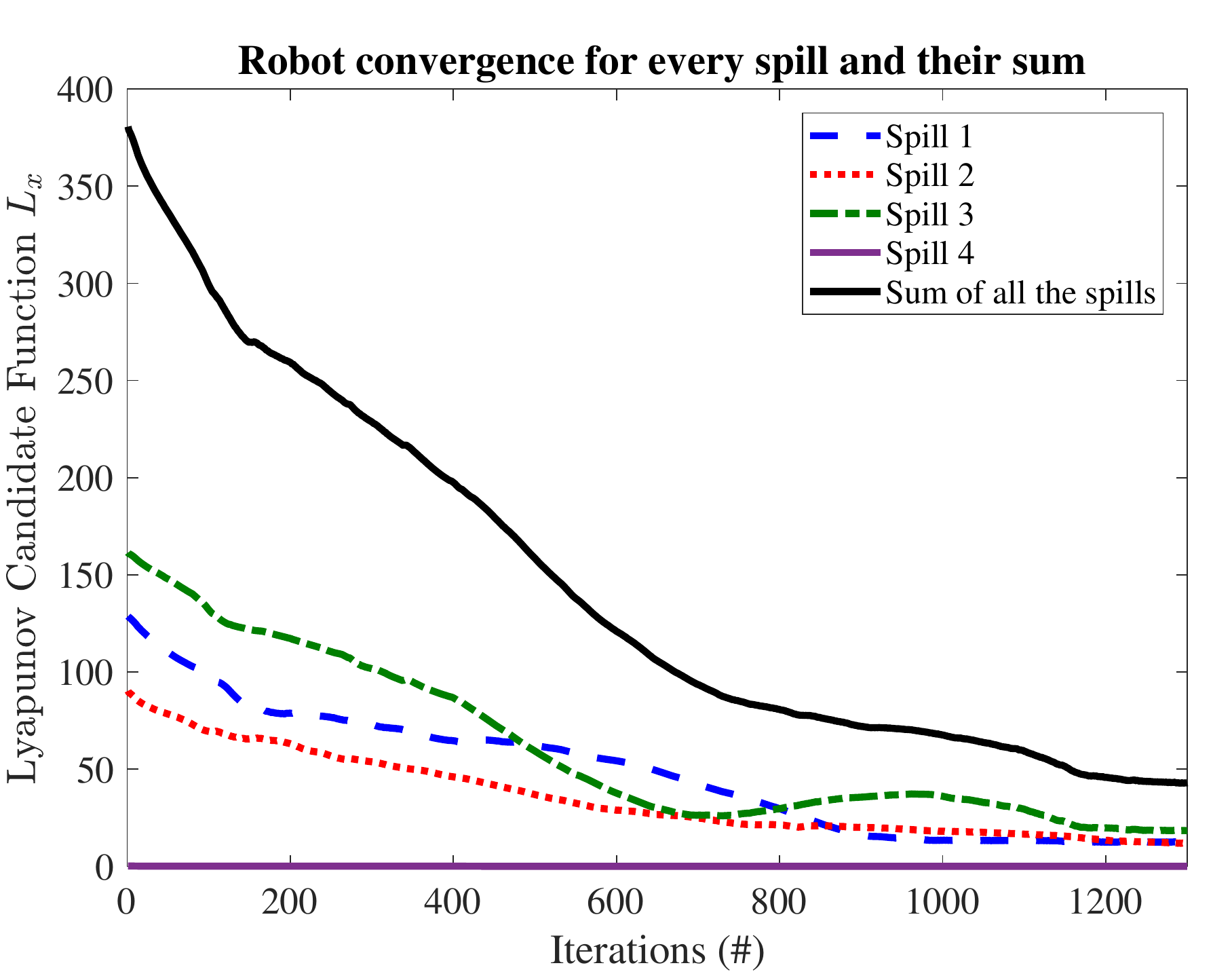}} 
\end{tabular}
\caption{(a) The same initial distribution of robots as Case 1, with wireless connectivity represented as a graph in solid blue lines and rendezvous points $D_m$ highlighted in red. (b) shows boundary shrink control trajectories with multi-point rendezvous. The area evolution and convergence of Case 2 are shown in (c) and (d), respectively.}
\label{fig:case_2}
\end{figure}

\subsection{Further Discussion}
\label{sec:further_disc}
Since the completion time of boundary shrink control can be reduced if having more working robots, we test a total of four scenarios in which robot numbers are increasing with the same interval, i.e., $N=\{30, 40, 50, \, \text{and}\, 60\}$, provided that $r_{\mathcal{C}}$ and $r_{\mathcal{A}}$ are the same as Case 2. 
Our analysis of these four scenarios reveals some significant outcomes. First, the total operation time decreased significantly by having more robots involved. More robots may lead to more rendezvous points and a fewer number of hierarchy levels, which accelerates rendezvous process. This result further substantiates the significance of our proposed multi-point rendezvous strategy. The area evolutions of the four spills as a whole for each scenario are depicted in Fig.~\ref{fig:area_for_N}.
Second, the operation time decreased much slower as we employed more than 50 robots, mostly because inter-robot collision avoidance has become an dominating issue that resulted in a drop in efficiency. 

Furthermore, we validate our solution for different operation ranges: $d=\{0.045\text{m}, 0.09\text{m}, 0.135\text{m},\, \text{and}\, 0.18\text{m}\}$, with the same initial robot distribution as Case 2 and appropriate range values. 
The area evolution of the entire workspace for these operation ranges is shown in Fig.~\ref{fig:area_for_d}. The videos of these experiments can be found in the paper video available at {\it \url{https://youtu.be/XUrKICAk4SQ}}.

Through these experiments, we validated the efficacy, efficiency, and scalability of our proposed solution. 

\begin{figure}[!t]
\vspace{-0.8cm}
\centering
\begin{tabular}{cc}
\hspace{-0.4cm}
\subfigure[]{\label{fig:area_for_N}
\includegraphics[width=0.24\textwidth,trim={0 0 13.3cm 0},clip]{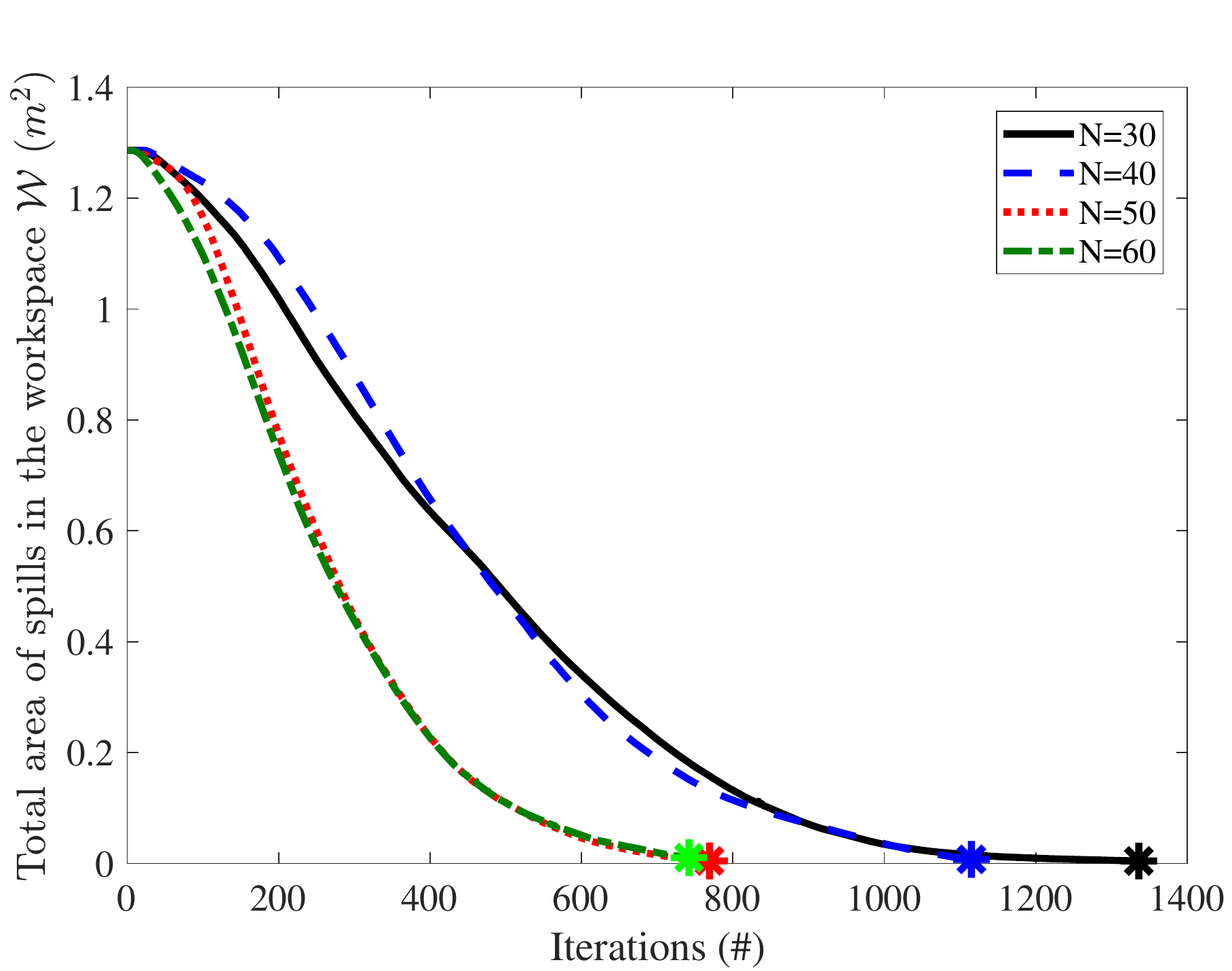}} &
\subfigure[]{\label{fig:area_for_d}
\hspace{-0.6cm}
\includegraphics[width=0.24\textwidth,trim={0 0 13.3cm 0},clip]{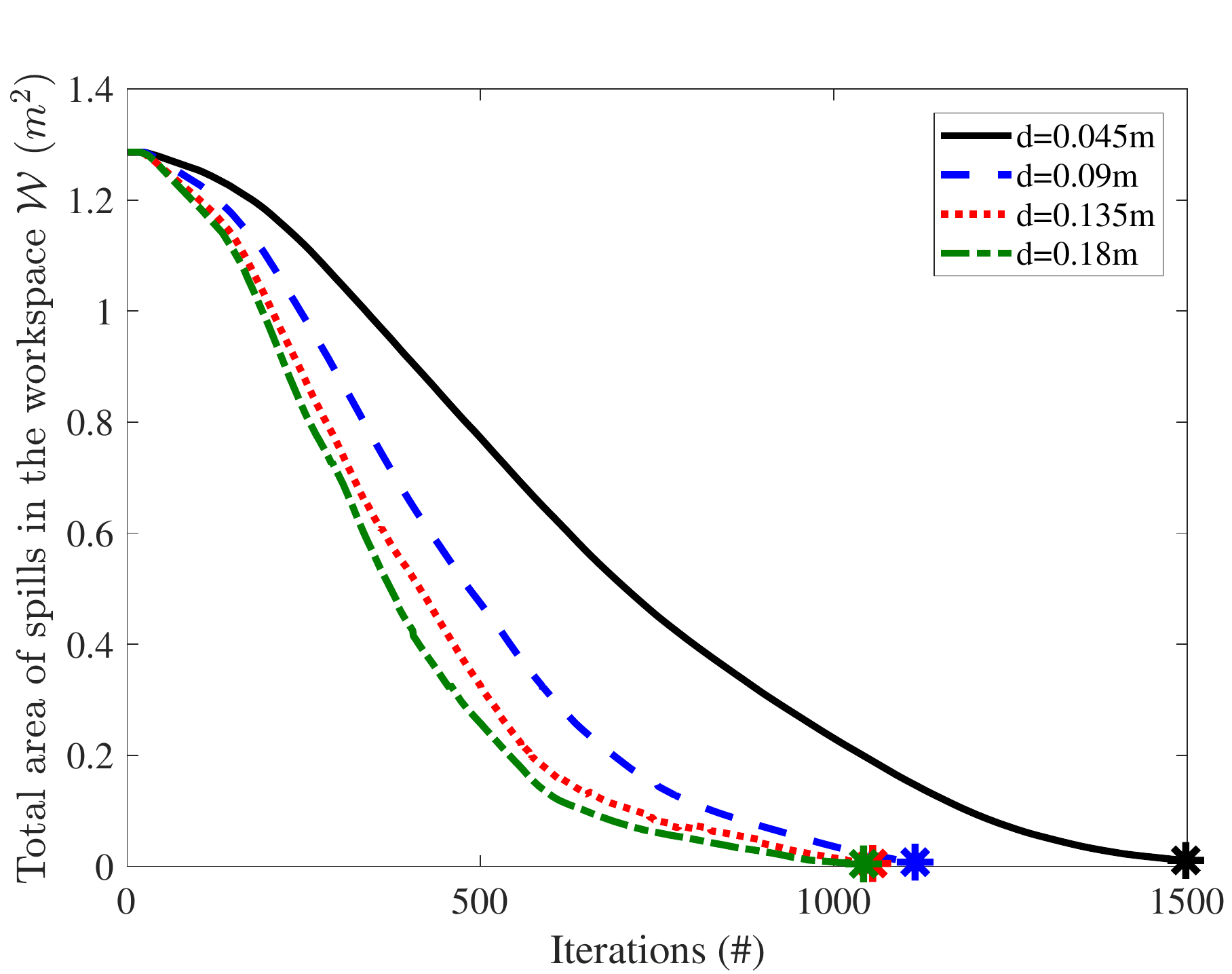}} 
\end{tabular}

\caption{(a) The change of residual areas in the workspace with respect to the iteration number for having different number of working robots $N$. The star marker at the end of every curve indicates its corresponding $\max\{k_{stop}\}$. (b) The evolution of residual areas for having different operation range $d$. The star markers indicated $\max\{k_{stop}\}$ for each scenario.}
\label{fig:Area_change}
\end{figure}

\section{Non-convex Spill}
\label{sec:nonconvex_spill}
The covered spill mentioned above which boundary shrink control performs on is assumed to be convex during operation. However, as can be seen from the experiment videos, Spill 3 failed to remain convex while robots were collecting the spill. Fortunately, the task was still completed because the spill was not completely broken. It is possible to overlook this situation and apply our solution to non-convex situations. However, the challenge is how to make sure every separate piece of spill has at least one robot associated with it when an original spill is broken into several pieces, since the narrowest place is eroded with coverage. Robot rendezvous can be executed to reallocate the robots or create allocation methods that promote robot dispersion and prevent many robots from gathering at one piece \cite{batalin2002spreading}. Enabling ad hoc networking among robots may also help to achieve convergence while the robots are assembling for coverage. Research on this topic is to be further explained in our future works.

\section{Conclusions and Future Work}
\label{sec:conclusions}
This paper proposes a concrete boundary shrink control strategy that can be used in scenarios requiring collective coverage such as oil spill cleanup, complete exploration of an environment, and cooperative de-mining. The system does not rely on any method of global localization; its distributed mechanism promises scalability and robustness to varied scenarios. The boundary shrink control is implemented as a hybrid solution consisting of three states: {\it rendezvous}, {\it searching}, and {\it tracking}. The convergence and stability of this hybrid control are proved mathematically.
Extensive experiments in simulation have verified the effectiveness of our solution in concurrently covering multiple spills under many practical constraints, and the adoption of different parameter configurations demonstrated the system's scalability and flexibility.

Future work will investigate how the proposed boundary shrink control method can be adapted to non-convex spills. We will also examine how to decide the optimal numbers of robots for given spill shapes and distributions. In addition, more features such as fault tolerance will be introduced with the primary goal of improving the system's robustness and efficiency in complex environments.


\bibliographystyle{IEEEtran}
\bibliography{main}

\begin{IEEEbiography}[{\includegraphics[width=1in,height=1.25in,clip,keepaspectratio]{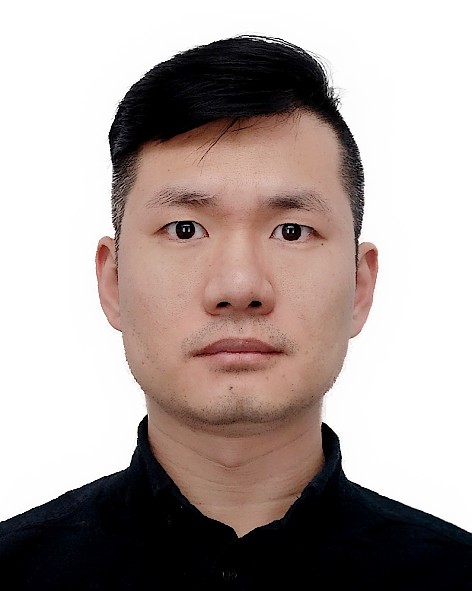}}]%
{Shaocheng Luo} received his B.S. degree in Mechanical Engineering from Harbin Institute of Technology in China in 2009, and the M.S. degree in Mechatronic Engineering from Zhejiang University in China in 2012. He received his Ph.D. degree in technology with a specialization in Robotics from Purdue University in West Lafayette, Indiana, USA, in 2020. His research interests span the areas of multi-robot systems, wireless networks, robotic system applications in environmental monitoring and operations, and cyber-physical systems.
\end{IEEEbiography}
\vskip 0pt plus -1fil

\begin{IEEEbiography}[{\includegraphics[width=1in,height=1.25in,clip,keepaspectratio]{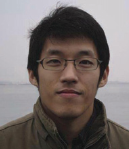}}]%
{Jonghoek Kim} received the B.S. degree in electrical and computer engineering from Yonsei University, South Korea, in 2006, the M.S. degree in electrical and computer engineering from the Georgia Institute of Technology, in 2008, and the Ph.D. degree, co-advised by Dr. F. Zhang and Dr. M. Egerstedt, from the Georgia Institute of Technology, Atlanta, GA, USA, in 2011. He has worked as a Senior Researcher with the Agency for Defense Development, South Korea, from 2011 to 2018. He is currently an Assistant Professor with the Department of Electrical and Computer Engineering, Hongik University, South Korea. His research is on target tracking, control theory, robotics, multiagent systems, and optimal estimation.
\end{IEEEbiography}
\vskip 0pt plus -1fil

\begin{IEEEbiography}[{\includegraphics[width=1in,height=1.25in,clip,keepaspectratio]{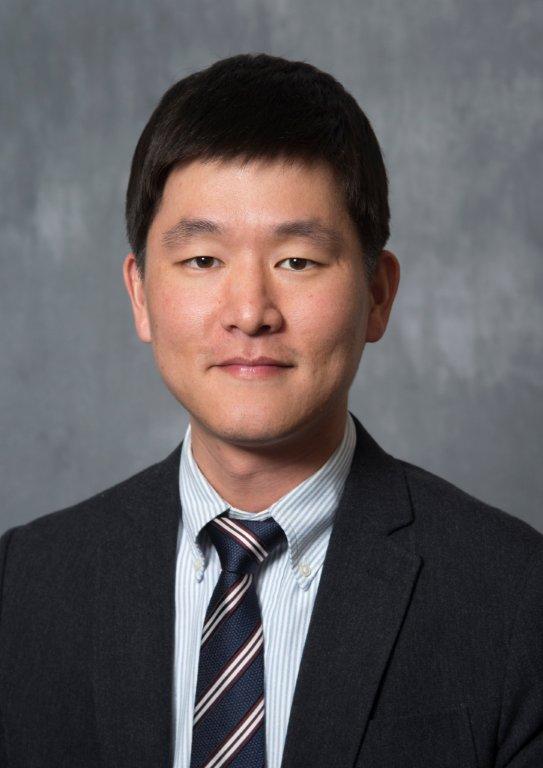}}]%
{Byung-Cheol Min} (M’14) received the B.S. degree in electronics engineering and the M.S. degree in electronics and radio engineering from Kyung Hee University, Yongin, South Korea, in 2008 and 2010, respectively, and the Ph.D. degree in technology with a specialization in robotics from Purdue University, West Lafayette, IN, USA, in 2014. 
 
He is an Assistant Professor of Department of Computer and Information Technology and the Director of the SMART Laboratory with Purdue University, West Lafayette, IN, USA. Prior to this position, he was a Postdoctoral Fellow with the Robotics Institute, Carnegie Mellon University, Pittsburgh, PA, USA. His research interests include multi-robot systems, human-robot interaction, robot design and control, with applications in field robotics and assistive technology and robotics.
 
He is a recipient of the NSF CAREER Award in 2019, Purdue PPI Outstanding Faculty in Discovery Award in 2019, and Purdue CIT Outstanding Graduate Mentor Award in 2019.

\end{IEEEbiography}

\end{document}